\newcommand{\reals}{I\!\!R} %real numbers
\DeclareMathOperator*{\argmin}{arg\,min}
\newtheorem{dfn}{Definition}
\newtheorem{clm}{Claim}
\begin{document}

%%%%%%%%%%%%%%%%%%%%%%%%%%%%%%%%%%%%%%
\title{Linear Dimensionality Reduction: \\Survey, Insights, and Generalizations}% \\ \todo{DRAFT}}

\author{%
\name John P. Cunningham
\email jpc2181@columbia.edu \\
\addr Department of Statistics\\
Columbia University\\
New York City, USA\\
\AND
\name Zoubin Ghahramani
\email{zoubin@eng.cam.ac.uk} \\
\addr Department of Engineering\\
University of Cambridge\\
Cambridge, UK
}

\editor{Gert Lanckriet}

\maketitle

%%%%%%%%%%%%%%%%%%%%%%%%%%%%%%%%%%%%%%
\begin{abstract}%   <- trailing '%' for backward compatibility of .sty file
Linear dimensionality reduction methods are a cornerstone of analyzing high dimensional data, due to their simple geometric interpretations and typically attractive computational properties.
These methods capture many data features of interest, such as covariance, dynamical structure, correlation between data sets, input-output relationships, and margin between data classes.  Methods have been developed with a variety of names and motivations in many fields, and perhaps as a result the connections between all these methods have not been highlighted.  
Here we survey methods from this disparate literature as optimization programs over matrix manifolds.  We discuss principal component analysis, factor analysis, linear multidimensional scaling, Fisher's linear discriminant analysis, canonical correlations analysis, maximum autocorrelation factors, slow feature analysis, sufficient dimensionality reduction, undercomplete independent component analysis, linear regression, distance metric learning, and more. 
This optimization framework gives insight to some rarely discussed shortcomings of well-known methods, such as the suboptimality of certain eigenvector solutions.   Modern techniques for optimization over matrix manifolds enable a  generic linear dimensionality reduction solver, which accepts as input data and an objective to be optimized, and returns, as output, an optimal low-dimensional projection of the data.  This simple optimization framework further allows straightforward generalizations and novel variants of classical methods, which we demonstrate here by creating an orthogonal-projection canonical correlations analysis.
More broadly, this survey and generic solver suggest that linear dimensionality reduction  can move toward becoming a blackbox, objective-agnostic numerical technology.  
\end{abstract}
%%%%%%%%%%%%%%%%%%%%%%%%%%%%%%%%%%%%%%

\begin{keywords}
dimensionality reduction, eigenvector problems, matrix manifolds
\end{keywords}

%%%%%%%%%%%%%%%%%%%%%%%%%%%%%%%%%%%%%%
\bigskip
\section{Introduction}

Linear dimensionality reduction methods have been developed throughout statistics, machine learning, and applied fields for over a century, and these methods have become indispensable tools for analyzing high dimensional, noisy data.  
These methods produce a low-dimensional linear mapping of the original high-dimensional data that preserves some feature of interest in the data.  Accordingly, linear dimensionality reduction can be used for visualizing or exploring structure in data, denoising or compressing data, extracting meaningful feature spaces, and more.  
 This abundance of methods, across a variety of data types and fields, suggests a great complexity to the space of linear dimensionality reduction techniques.  
As such, there has been little effort to consolidate our understanding.  Here we survey a host of methods and investigate when a more general optimization framework can improve performance and extend the generality of these techniques.

We begin by defining linear dimensionality reduction (Section \ref{sec:def}), giving a few canonical examples to clarify the definition.  We then interpret linear dimensionality reduction in a simple optimization framework as a program with a problem-specific objective over orthogonal or unconstrained matrices.  Section \ref{sec:review} surveys principal component analysis (PCA; \citealp{pearson1901, eckart1936}), multidimensional scaling (MDS; \citealp{torgerson1952, coxBook, borgBook}), Fisher's linear discriminant analysis (LDA; \citealp{fisher1936,rao1948}), canonical correlations analysis (CCA; \citealp{hotelling1936}), maximum autocorrelation factors (MAF;  \citealp{switzer1984}), slow feature analysis (SFA; \citealp{wiskott2002, wiskott2003}), sufficient dimensionality reduction (SDR; \citealp{fukumizu2004dimensionality, adragni2009}), locality preserving projections (LPP; \citealp{niyogi2004locality,he2005neighborhood}), undercomplete independent component analysis (ICA; e.g. \citealp{icaBook}), linear regression, distance metric learning (DML; \citealp{kulis2012metric, yang2006distance}), probabilistic PCA (PPCA; \citealp{tipping99, roweisSPCA, theobald75}), factor analysis (FA; \citealp{spearman1904}), several related methods, and important extensions such as kernel mappings and regularizations.  

A common misconception is that many or all linear dimensionality reduction problems can be reduced to eigenvalue or generalized eigenvalue problems.  Not only is this untrue in general, but it is also untrue for some very well-known algorithms that are typically thought of as generalized eigenvalue problems.  The suboptimality of using eigenvector bases in these settings is rarely discussed and is one notable insight of this survey.  Perhaps inherited from this eigenvalue misconception, a second common tendency is for practitioners to greedily choose the low-dimensional data: the first dimension is chosen to optimize the problem objective, and then subsequent dimensions are chosen to optimize the objective on a residual or reduced data set.  
The optimization framework herein shows the limitation of this view.  More importantly, the framework also suggests a more generalized linear dimensionality reduction solver that encompasses all eigenvalue problems as well as many other important variants.  In this survey we restate these algorithms as optimization programs over matrix manifolds that have a well understood geometry and a well developed optimization literature \cite[]{absilBook}.  This simple perspective leads to a generic algorithm for linear dimensionality reduction, suggesting that, like numerical optimization more generally, linear dimensionality reduction can become abstracted as a numerical technology for a range of problem-specific objectives.
In all, this work: \emph{(i)} surveys the literature on linear dimensionality reduction, \emph{(ii)} gives insights to some rarely discussed shortcomings of traditional approaches, and \emph{(iii)} provides a simple algorithmic template for generalizing to many more problem-specific techniques.

%%%%%%%%%%%%%%%%%%%%%%%%%%%%%%%%%%%%%%
\section{Linear Dimensionality Reduction as a Matrix Optimization Program}
\label{sec:def}

We define linear dimensionality reduction as all methods with the problem statement:  

\begin{dfn}[Linear Dimensionality Reduction]
\label{dfn:ldr}
Given $n$ $d$-dimensional data points $X = [x_1,...,x_n] \in \reals^{d \times n}$ and a choice of dimensionality $r < d$, optimize some objective $f_X(\cdot)$ to produce a linear transformation $P \in \reals^{r \times d}$, and call $Y = PX \in \reals^{r \times n}$ the low-dimensional transformed data. 
\end{dfn}

Note that throughout this work we assume without loss of generality that data $X$ is mean-centered, namely $X1 = 0$.  To make this definition concrete, we briefly detail two widespread linear dimensionality reduction techniques: principal component analysis  (PCA; \citealp{pearson1901}) and canonical correlations analysis (CCA; \citealp{hotelling1936}). 
PCA maximizes data variance captured by the low-dimensional projection, or equivalently minimizes the reconstruction error (under the $\ell_2$-norm) of the projected data points with the original data, namely
\begin{equation*}
f_X(M) = || X - MM^\top X||^2_F.
\end{equation*}
\noindent  Here $M$ is a matrix with $r$ orthonormal columns.  In the context of Definition \ref{dfn:ldr}, optimizing $f_X(M)$ produces an $M$ such that $P = M^\top$, and the desired low-dimensional projection is $Y = M^\top X$.   PCA is discussed in depth in Section \ref{sec:pca}.

We stress that the notation of $M$ and $P$ in Definition \ref{dfn:ldr} is not redundant, but rather is required for other linear dimensionality reduction techniques where the linear transformation $P$ does not equal the optimization variable $M$ (as it does in PCA).  Consider CCA, another classical linear dimensionality reduction technique that jointly maps two data sets $X_a  \in \reals^{d_a \times n}$ and $X_b  \in \reals^{d_b \times n}$ to $Y_a  \in \reals^{r \times n}$ and $Y_b  \in \reals^{r \times n}$, such that the sample correlation between $Y_a$ and $Y_b$ is maximized\footnote{As a point of technical detail, note that the use of two data sets and mappings is only a notational convenience;  writing the CCA projection as $Y = \begin{bmatrix}Y_a \\ Y_b\end{bmatrix} = \begin{bmatrix}P_a & 0 \\ 0 & P_b\end{bmatrix}\begin{bmatrix}X_a \\ X_b\end{bmatrix} = PX$, we see that CCA adheres precisely to Definition \ref{dfn:ldr}.}.  Under the additional constraints that $Y_a$ and $Y_b$ have uncorrelated variables ($Y_a^{}Y_b^\top = \Lambda$, a diagonal matrix) and be individually uncorrelated with unit variance ($\frac{1}{n}Y_a^{}Y_a^\top = \frac{1}{n}Y_b^{}Y_b^\top = I$), a series of standard steps produces the well known objective
\begin{equation*}
f_X\left( M_a , M_b \right) = \frac{1}{r} \mathrm{ tr}\left( M_a^\top (X_a^{} X_a^\top)^{-1/2}X_a^{}X_b^\top (X_b^{} X_b^\top)^{-1/2} M_b^{} \right), 
\end{equation*}
\noindent as will be detailed in depth in Section \ref{sec:cca}.  This objective is maximized when $M_a^\top$ and $M_b^\top$ are the left and right  singular vectors of the matrix $(X_a^{} X_a^\top)^{-1/2}X_a^{}X_b^\top (X_b^{} X_b^\top)^{-1/2}$.  In the context of Definition \ref{dfn:ldr}, the low dimensional canonical variables $Y_a$ are then related to the original data as $Y_a = P_aX_a \in \reals^{r \times n}$, where $P_a = M_a^\top \left(X_a^{}X_a^\top\right)^{-1/2}$ (and similar for $Y_b$).   Since $M_a$ has by definition orthonormal columns, CCA, by inclusion of the whitening term $\left(X_a^{}X_a^\top\right)^{-1/2}$, does not represent an orthogonal projection of the data.
Accordingly, CCA and PCA point out two key features of linear dimensionality reduction and Definition \ref{dfn:ldr}:  first, that the objective function $f_X(\cdot)$ need not entirely define the linear mapping $P$ to the low-dimensional space; and second, that not all linear dimensionality reduction methods need be orthogonal projections, or indeed projections at all.     

Note also that both PCA and CCA result in a matrix decomposition, and indeed a common approach to many linear dimensionality reduction methods is to attempt to cast the problem as an eigenvalue or generalized eigenvalue problem \cite[]{burges2010}.  This pursuit can be fruitful but is limited, often resulting in ad hoc or suboptimal algorithms.   As a specific example, in many settings orthogonal projections of data are required for visualization and other basic needs.  Can we create an \emph{Orthogonal CCA}, where we seek orthogonal projections $Y_a = M_a^\top X_a^{}$ for a matrix $M_a$ with orthonormal columns (and similar for $Y_b$), such that the sample correlation between $Y_a$ and $Y_b$ is maximized?  No known eigenvalue problem can produce this projection, so one tempting and common approach is to orthonormalize $P_a$ and $P_b$ (the results found by traditional CCA).  We will show that this choice can be significantly suboptimal, and in later sections we will create Orthogonal CCA using a generic optimization program.  Thus matrix decomposition approaches suggest an unfortunate limitation to the set of possible linear dimensionality reduction problems, and a broader framework is required to fully capture Definition \ref{dfn:ldr} and linear dimensionality reduction. 
 
%%%%%%%%%%%%%%%%%%%%%%%%%%%%%%%%%%%%%%
\subsection{Optimization Framework for Linear Dimensionality Reduction}
\label{sec:opt}

All linear dimensionality reduction methods presented here
can be viewed as solving an optimization program over a matrix manifold $\mathcal{M}$, namely
\begin{equation}
\label{eq:1}
\begin{aligned}
& \text{minimize}
& & f_X(M) \\
& \text{subject to}
&& M \in \mathcal{M}.
\end{aligned}
\end{equation}
\noindent Given Definition \ref{dfn:ldr}, the intuition behind this optimization program should be apparent:  the objective $f_X(\cdot)$ defines the feature of interest to be captured in the data, and the matrix manifold encodes some aspects of the linear mapping $P$ such that $Y = PX$.\footnote{Note that several methods will require optimization over additional auxiliary unconstrained variables, which can be addressed algorithmically via a coordinate descent approach (alternating optimizations over the auxiliary variable and Equation \ref{eq:1}) or some more nuanced scheme.}

All methods considered here specify $M$ as one of two matrix forms.  First, some methods are unconstrained optimizations over rank $r$ linear mappings, implying the trivial manifold constraint of Euclidean space, which we denote as $M \in \reals^{d \times r}$.   In this case, optimization may be straightforward, and algorithms like expectation-maximization \cite[]{dempster77} or standard first order solvers have been well used.  

Second, very often the matrix form will have an orthogonality constraint $\mathcal{M} = \{M \in \reals^{d \times r} ~:~ M^\top M = I\}$, corresponding to orthogonal projections of the data $X$.  In this case we write $\mathcal{M} = \mathcal{O}^{d \times r}$.
As noted previously, the typical and often flawed approach is to attempt to cast these problems as eigenvalue problems.  
Instead, viewed through the lens of Equation \ref{eq:1}, linear dimensionality reduction is simply an optimization program over a matrix manifold,  and indeed there is a  well-developed optimization literature for matrix manifolds (foundations include  \citealp{luenberger1972, gabay1982, edelman1998}; an excellent summary is \citealp{absilBook}).

As a primary purpose of this work is to survey linear dimensionality reduction, we first detail linear dimensionality reduction techniques using this optimization framework.   We then implement a generic solver for programs of the form Equation \ref{eq:1}, where $\mathcal{M}$ is the family of orthogonal matrices $\mathcal{O}^{d \times r}$.   Thus we show the framework of Equation \ref{eq:1} to be not only conceptually simplifying, but also algorithmically simplifying.  Instead of resorting to ad hoc (and often suboptimal) formulations for each new problem in linear dimensionality reduction, practitioners need only specify the objective $f_X(\cdot)$ and the high-dimensional data $X$, and these numerical technologies can produce the desired low-dimensional data. Section \ref{sec:examples} validates this claim by applying this generic solver  without change to different objectives $f_X(\cdot)$, both classic and novel.
 We require only the condition that $f_X(\cdot)$ be differentiable in $M$ to enable simple gradient descent methods.  However, this choice is a convenience of implementation and not a fundamental issue, and thus approaches for optimization of nondifferentiable objectives over nonconvex sets (here  $\mathcal{O}^{d \times r}$) could be readily introduced to remove this restriction (for example, \citealp{boyd2011distributed}).

%%%%%%%%%%%%%%%%%%%%%%%%%%%%%%%%%%%%%%%%%%%%%
\section{Survey of Linear Dimensionality Reduction Methods}
\label{sec:review}

We now review linear dimensionality reduction techniques using the framework of Section \ref{sec:def}, to understand the problem-specific objective and manifold constraint of each method.

%%%%%%%%%%%%%%%%%%%%%%%%%%%%%%%%%%%%%%%%%%%%%
\subsection{Linear Dimensionality Reduction with Orthogonal Matrix Constraints}
\label{sec:orth}

Amongst all dimensionality reduction methods, the most widely used techniques are orthogonal projections.  These methods owe their popularity in part due to their simple geometric interpretation as a low-dimensional view of high-dimensional data.  This interpretation is of great comfort to many application areas, since these methods do not artificially create or exaggerate many types of structure in the data, as is possible with other models that encode strong prior assumptions.

%%%%%%%%%%%%%%%%%%%%%%%%%%%%%%%%%%%%%%%%%%%%%
\subsubsection{Principal Component Analysis}
\label{sec:pca}

Principal component analysis (PCA) was originally formulated by \cite{pearson1901} as a minimization of the sum of squared residual errors between projected data points and the original data $f_X(M) = || X - MM^{\top}X||_F^2 =   \sum_{i=1}^n || x_i - MM^{\top}x_i ||_2^2 $.  Modern treatments tend to favor the  equivalent ``maximizing variance" derivation (e.g., \citealp{bishopBook}), resulting in the objective $-\text{tr}(M^{\top}XX^{\top}M)$.  We write PCA in the formulation of Equation \ref{eq:1} as
\begin{equation}
\label{eq:pca}
\begin{aligned}
& \text{minimize}
& & || X - MM^{\top}X||_F^2  \\
& \text{subject to}
&& M \in \mathcal{O}^{d \times r}.
\end{aligned}
\end{equation}
Equation \ref{eq:pca} leads to the familiar SVD solution: after summarizing the data by its sample covariance matrix $\frac{1}{n}XX^{\top}$, the decomposition $XX^{\top} = Q\Lambda Q^{\top}$ produces an optimal point $M = Q_r$, where $Q_r$ denotes the columns of $Q$ associated with the largest $r$ eigenvalues of $XX^{\top}$ \cite[]{eckart1936,mirsky1960,golubBook}. 

There are many noteworthy extensions to PCA.  A first example is kernel PCA, which uses PCA on a feature space instead of the inputs themselves \cite[]{kpca}, and indeed some dimensionality reduction methods and their kernelized counterparts can be considered together as kernel regression problems \cite[]{delatorre2012}.  While quite important for all machine learning methods, we consider kernelized methods orthogonal to much of the presentation here, since using this kernel mapping is a question of representation of data, not of the dimensionality reduction algorithm itself.   

Second, there have been several probabilistic extensions to PCA, such as probabilistic PCA (PPCA; \citealp{tipping99, roweisSPCA}), extreme component analysis \cite[]{welling2003}, and minor component analysis \cite[]{agakov2002}.   These algorithms all share a common purpose (modeling covariance) and the same coordinate system for projection (the principal axes of the covariance ellipsoid), even though they differ in the particulars of the projection and which basis is chosen from that coordinate system.  We present PPCA as a separate algorithm below and leave the others as extensions of this core method.

Third, extensions have introduced outlier insensitivity via a different implicit noise model such as a Laplace observation model, leading to a few examples of robust PCA \cite[]{galpin1987, baccini1996, choulakian2006}.  An alternative approach to robust PCA is driven by the observation that a small number of highly corrupted observations can drastically influence standard PCA.  \cite{candes2011} takes this approach to robust PCA, considering the data as low-rank plus sparse noise.  Their results have particular theoretical and practical appeal and connect linear dimensionality reduction to the substantial nuclear-norm minimization literature.

Fourth, PCA has been made sparse in several contexts \cite[]{zou2006, daspremont2007, daspremont2008, journee2010}, where the typical PCA objective is augmented with a lasso-type $\ell_1$ penalty term, namely $f_X(M) = ||X - MM^\top X||_F^2 + \lambda ||M||_1$, with penalty term $\lambda$ and $||M||_1 = \sum_i\sum_j |M_{ij}|$.  This objective does not admit an eigenvalue approach, and as a result several specialized algorithms have been proposed.  Note however that this sparse objective is again simply a program over $\mathcal{O}^{d \times r}$ (albeit nondifferentiable).

Fifth, another class of popular extensions generalizes PCA to other exponential family distributions, beyond the implicit normal distribution of standard PCA \cite[]{collins2002,shakir2008}.  These methods, while important, result in nonlinear mappings of the data and thus fall outside the scope of Definition \ref{dfn:ldr}.   Additionally, there are other nonlinear extensions to PCA; Chapter 12.6 of \cite{icaBook} gives an overview.

%%%%%%%%%%%%%%%%%%%%%%%%%%%%%%%%%%%%%%%%%%%%%
\subsubsection{Multidimensional Scaling}
\label{sec:mds}

Multidimensional scaling (MDS; \citealp{torgerson1952, coxBook, borgBook}) is a class of methods and a large literature in its own right, but its connections to linear dimensionality reduction and PCA are so well-known that it warrants individual mention.  PCA minimizes low-dimensional reconstruction error, but another sensible objective is to maximize the scatter of the projection, under the rationale that doing so would yield the most informative projection (this choice is sometimes called classical scaling).  Defining our projected points $y_i  = M^\top x_i$ for some $M \in \mathcal{O}^{d \times r}$, MDS seeks to maximize pairwise distances $\sum_i \sum_j ||y_i - y_j||^2$.  

MDS leads to the seemingly novel optimization program (Equation \ref{eq:1}) over the scatter objective $f_X(M) = \sum_i \sum_j ||M^\top x_i - M^\top x_j||^2$, which can be expanded as
\begin{equation} 
\label{eq:mds2}
 f_X(M) \propto \text{tr}\left(M^\top XX^\top M\right) -  1^\top X^\top MM^\top X1 = \text{tr}\left(M^\top X \left(I - \frac{1}{n}11^\top\right)X^\top M \right),
\end{equation}
\noindent where we denote the vector of all ones as $1$.  Noting that $X$ has zero mean by definition and thus $X(I - \frac{1}{n}11^\top) = X$, we see classical MDS is precisely the `maximal variance' PCA objective $\text{tr}(M^\top XX^\top M)$.  

The equivalence of MDS and PCA in this special case is well-known \cite[]{coxBook, borgBook, mardiaBook, williams2002}, and indeed this particular example only scratches the surface of MDS, which is usually considered in much more general terms.  Specifically,  if we have available only pairwise distances $d_X(x_i,x_j)$, a more general MDS problem statement is to fit the low-dimensional data so as to preserve these pairwise distances as closely as possible in the least squares sense: minimizing $\sum_i \sum_j (d_X(x_i,x_j) - d_Y(y_i,y_j))^2$ is known as Kruskal-Shephard scaling, and the distance metrics can be arbitrary and different between the original and low-dimensional data.   First, it is worth noting that least squares is by no means the only appropriate stress function on the distances $d_X$ and $d_Y$; a Sammon mapping is another common choice (see for example \cite{hastieBook}, \S 14.8).  Second, MDS does not generally require the data itself, but only the pairwise dissimilarities $d_{ij} = d_X(x_i,x_j)$, which is often a useful property.  When the data is known, we see here that if we specify a low-dimensional orthogonal projection $Y = M^\top X$, then indeed this objective will result in the class of linear dimensionality reduction programs
\begin{equation}
\label{eq:mds0}
\begin{aligned}
& \text{minimize}
& & \sum_i \sum_j \left(d_X\left(x_i,x_j\right) - d_Y\left(M^\top x_i,M^\top x_j\right)\right)^2  \\
& \text{subject to}
&& M \in \mathcal{O}^{d \times r}.
\end{aligned}
\end{equation}
\noindent Special approaches exist to solve this program on a case-by-case basis \cite[]{coxBook, borgBook}.   However, by broadly considering Equation \ref{eq:mds0} as an optimization over orthogonal projections, we again see the motivation for a generic numerical solver for this class of problems, obviating objective-specific methods.  

Of course, the low-dimensional data $Y$ need not be a linear mapping of $X$ (indeed, in many cases the original points $X$ are not even available).    This more general form of MDS is used in a variety of nonlinear dimensionality reduction techniques, including prominently Isomap \cite[]{tenenbaum2000}, as discussed below in Section \ref{sec:outside}.

%%%%%%%%%%%%%%%%%%%%%%%%%%%%%%%%%%%%%%%%%%%%%
\subsubsection{Linear Discriminant Analysis}
\label{sec:lda}

Another natural problem-specific objective occurs when the data $X$ has associated class labels, of which Fisher's linear discriminant analysis (LDA; \citealp{fisher1936,rao1948}; modern references include \citealp{fukunagaBook, bishopBook}) is perhaps the most prominent example.  The purpose of LDA is to project the data in such a way that separation between classes is maximized.  To do so, LDA begins by partitioning the data covariance $XX^\top$ into covariance contributed within each of the $c$ classes ($\Sigma_W$) and covariance contributed between the classes ($\Sigma_B$) , such that $XX^\top = \Sigma_W + \Sigma_B$ for
\begin{equation}
\Sigma_W = \overset{n}{\sum_{i = 1}} (x_i - \mu_{c_i})(x_i - \mu_{c_i})^\top  ~~~~~~~~~~~~~ \Sigma_B = \overset{n}{\sum_{i = 1}} (\mu_{c_i} - \mu)(\mu_{c_i} - \mu)^\top,
\end{equation}
\noindent where $\mu$ is the global data mean (here $\mu=0$ by definition) and $\mu_{c_i}$ is the class mean associated with data point $x_i$. LDA seeks the projection that maximizes between-class variability $\text{tr}\left( M^\top \Sigma_B M \right)$ while minimizing within-class variability $\text{tr}\left( M^\top \Sigma_W M \right)$, leading to the optimization program
\begin{equation}
\label{eq:lda}
\begin{aligned}
& \text{maximize}
& & \frac{\text{tr}\left( M^\top \Sigma_B M \right)}{\text{tr}\left( M^\top \Sigma_W M \right)}  \\
& \text{subject to}
&& M \in \mathcal{O}^{d \times r}.
\end{aligned}
\end{equation}
\noindent This objective appears very much like a generalized Rayleigh quotient, and is so for $r=1$.  In this special case, $M \in \mathcal{O}^{d \times 1}$ can be found as the top eigenvector of $\Sigma_W^{-1} \Sigma_B^{}$, which can be seen by substituting $L = \Sigma_W^{{1}/{2}} M$ into Equation \ref{eq:lda} above.   This one-dimensional LDA projection is appropriate when there are $c=2$ classes. 

A common misconception is that LDA for higher dimensional projections $r>1$ can be solved with a greedy selection of the top $r$ eigenvectors of $\Sigma_W^{-1} \Sigma_B^{}$. However, this is certainly not the case, as the top $r$ eigenvectors of $\Sigma_W^{-1} \Sigma_B^{}$ will not in general be orthogonal.  The eigenvector solution solves the similar but not equivalent objective  $\text{tr}\left( \left(M^\top \Sigma_W M \right)^{-1}\left( M^\top \Sigma_B M \right)\right)$ over $M \in \reals^{d \times r}$; these two objectives and a few others are nicely discussed in Chapter 10 of \cite{fukunagaBook}.  While each of these choices has its merits, in the common case that one seeks a projection of the original data, the orthogonal $M$ produced by solving Equation \ref{eq:lda} is more appropriate.  Though rarely discussed, this misconception between the trace-of-quotient and the quotient-of-traces has been investigated in the literature \cite[]{yan2006, shen2007}. 

The commonality of this misconception adds additional motivation for this work, to survey and consolidate a fragmented literature.  Second, this misconception also points out the limitations of eigenvector approaches: even when considered the standard algorithm for a popular method, eigenvalue decompositions may in fact be an inappropriate choice.  Third, as Equation \ref{eq:lda} is a simple program over orthogonal projections, we see again the utility of a generic solver, an approach which should outperform traditional approaches (and indeed does, as Section \ref{sec:examples} will show).

In terms of extensions, we note a few key constraints of classical LDA:  each data point must be labeled with a class (no missing observations), each data point must be labeled with only one class (no mixed membership), and the class boundaries are modeled as linear.   As a first extension, one might have incomplete class labels; \cite{yu2006supervised} extends LDA (with a probabilistic PCA framework; see Section \ref{sec:ppca}) to the semi-supervised setting where not all points are labeled.  Second, data points may represent a mixture of multiple features, such that one wants to extract a projection where one feature is most discriminable.  \cite{brendel2011demixed} offers a possible solution by marginalizing covariances over each feature of interest.  Third,   \cite{mika1999} has extended LDA to the nonlinear domain via kernelization, which has also been well used.

%%%%%%%%%%%%%%%%%%%%%%%%%%%%%%%%%%%%%%%%%%%%%
\subsubsection{Canonical Correlations Analysis}
\label{sec:cca}

Canonical correlation analysis (CCA) is a problem of joint dimensionality reduction: given two data sets $X_a  \in \reals^{d_a \times n}$ and $X_b  \in \reals^{d_b \times n}$, find low-dimensional mappings $Y_a = P_aX_a$ and $Y_b=P_bX_b$ that maximize the correlation between $Y_a$ and $Y_b$, namely
{\footnotesize
\begin{equation}
\label{eq:corr}
 \rho\left( y_a , y_b \right)   = \frac{E\left( y_a ^\top y_b^{} \right)}{ \sqrt{E\left(y_a^\top y_a^{}\right)E\left(y_b^\top y_b^{}\right)}}  = \frac{ \mathrm{tr}\left( Y_a^{}Y_b^\top \right) }{  \sqrt{ \mathrm{tr}\left( Y_a^{}Y_a^\top \right)  \mathrm{tr}\left( Y_b^{}Y_b^\top \right) } }  
= \frac{ \mathrm{tr}\left(P_a^{} X_a^{} X_b^\top P_b^\top\right)}{\sqrt{\mathrm{tr}\left(P_a^{} X_a^{}X_a^\top P_a^\top \right)\mathrm{tr}\left(P_b^{} X_b^{} X_b^\top P_b^\top\right)}}.
\end{equation}
}
CCA was originally derived in \cite{hotelling1936}; more modern treatments include \cite{muirheadBook,  timm2002applied, hardoon2004, hardoon2009convergence}.   This method in its classical form, which we call  \emph{Traditional CCA}, seeks to maximize $\rho\left(y_a,y_b\right)$ under the constraint that all variables are uncorrelated and of unit variance: $\frac{1}{n}Y_a^{}Y_a^\top = I$,  $\frac{1}{n}Y_b^{}Y_b^\top = I$, and $Y_a^{}Y_b^\top = \Lambda$ for some diagonal matrix $\Lambda$.   As an optimization program over $P_a$ and $P_b$, Traditional CCA solves
\begin{equation}
\label{eq:cca_trad}
\begin{aligned}
& \text{maximize}
& &  \frac{ \mathrm{tr}\left(P_a^{} X_a^{} X_b^\top P_b^\top\right)}{\sqrt{\mathrm{tr}\left(P_a^{} X_a^{}X_a^\top P_a^\top\right)\mathrm{tr}\left(P_b^{} X_b^{} X_b^\top P_b^\top\right)}} \\
& \text{subject to} && \frac{1}{n} P_a^{}X_a^{}X_a^\top P_a^\top = I\\
& && \frac{1}{n} P_b^{}X_b^{}X_b^\top P_b^\top = I\\
& && P_a^{}X_a^{}X_b^\top P_b^\top = \Lambda.
\end{aligned}
\end{equation}
  Using the substitution $P_a = M_a^\top\left(X_a^{}X_a^\top\right)^{-1/2}$ for $M_a \in \mathcal{O}^{d_a \times r}$ (and similar for $P_b$), Traditional CCA reduces to the well known objective
\begin{equation}
\label{eq:cca_trad_simple}
\begin{aligned}
& \text{maximize}
& & \mathrm{ tr}\left( M_a^\top (X_a^{} X_a^\top)^{-1/2}X_a^{}X_b^\top (X_b^{} X_b^\top)^{-1/2} M_b^{} \right)  \\
& \text{subject to}
&& M_a \in \mathcal{O}^{d_a \times r}\\
& && M_b \in \mathcal{O}^{d_b \times r}.
\end{aligned}
\end{equation}
This objective is maximized when $M_a^\top$ is the top $r$ left singular vectors and $M_b^\top$ is the top $r$ right singular vectors of $(X_a^{} X_a^\top)^{-1/2}X_a^{}X_b^\top (X_b^{} X_b^\top)^{-1/2}$.  The linear transformations optimizing Equation \ref{eq:cca_trad} are then calculated as $P_a = M_a^\top (X_a^{} X_a^\top)^{-1/2}$, and similar for $P_b$.   This solution is provably optimal for any dimensionality $r$ under the imposed constraints \cite[]{muirheadBook}.

It is apparent by construction that $P_a$ and $P_b$ do not in general represent orthogonal projections (except when $X_a^{}X_a^\top = I$ and $X_b^{}X_b^\top = I$, respectively), and thus Traditional CCA is unsuitable for common settings (such as visualization of data in an orthogonal axis) where an orthogonal mapping is required.  In these cases, a common heuristic approach is to orthogonalize $P_a$ and $P_b$ to produce orthogonal mappings of the data $Y_a^{} = M_a^\top X_a^{}$ and $Y_b^{} = M_b^\top X_b^{}$.  This heuristic choice, however, produces suboptimal results for the original correlation objective of Equation \ref{eq:corr} for all dimensions $r > 1$ (the $r=1$ case is trivially an orthogonal projection), as the results will show. 

Our approach addresses a desire for orthogonal projections directly: with the optimization framework of Equation \ref{eq:1}, we can immediately write down a novel linear dimensionality reduction method that preserves Hotelling's original objective but is properly generalized to produce orthogonal projections.   We call this method \emph{Orthogonal CCA}, maximizing the correlation $\rho\left( y_a , y_b \right)$ objective directly over orthogonal matrices, namely
\begin{equation}
\label{eq:cca_orth}
\begin{aligned}
& \text{maximize}
& &\frac{ \mathrm{tr}\left(M_a^\top X_a^{} X_b^\top M_b^{}\right)}{\sqrt{\mathrm{tr}\left(M_a^\top X_a^{}X_a^\top M_a^{}\right)\mathrm{tr}\left(M_b^\top X_b^{} X_b^\top M_b^{}\right)}} \\
& \text{subject to}
&& M_a \in \mathcal{O}^{d_a \times r}\\
& && M_b \in \mathcal{O}^{d_b \times r}.
\end{aligned}
\end{equation}
The resulting low-dimensional mappings are then the orthogonal projections that we desire: $Y_a^{} = M_a^\top X_a^{}$ and  $Y_b^{} = M_b^\top X_b^{}$.   The optimization program of Equation \ref{eq:cca_orth} can not be solved with a known matrix decomposition, thus requiring a direct optimization approach.  More importantly, we point out the meaningful difference between Traditional CCA and Orthogonal CCA: Traditional CCA whitens each data set $X_a$ and $X_b$, and then orthogonally projects these whitened data into a common space such that correlation is maximized.  Orthogonal CCA on the other hand preserves the covariance of the original data $X_a$ and $X_b$, finding orthogonal projections where correlation is maximized without the initial whitening step.  It is unsurprising then that these two methods should return different mappings, even when the Traditional CCA result is orthogonalized post hoc.   Accordingly, CCA demonstrates the utility of considering linear dimensionality reduction in the framework of Equation \ref{eq:1}; methods can be directly written down for the objective and projection of interest, without having to shoehorn the problem into an eigenvector decomposition.

%%%%%%%%%%%%%%%%%%%%%%%%%%%%%%%%%%%%%%%%%%%%%
\subsubsection{Maximum Autocorrelation Factors}
\label{sec:maf}

There are a number of linear dimensionality reduction methods that seek to preserve temporally interesting structure in the projected data.
 A first simple example is maximum autocorrelation factors (MAF;  \citealp{switzer1984, larsen2002}).
Suppose the high-dimensional data $X \in \reals^{d \times n}$ has data points $x_t$ for $t\in \{1,...,n\}$, and that the index label $t$ defines an order in the data.
In such a setting, the structure of interest for the low-dimensional representation may have nothing to do with modeling data covariance (like PCA), but rather the appropriate description should include temporal structure.  

Assume that there is an underlying $r$-dimensional temporal signal that is smooth, and that the remaining $d-r$ dimensions are noise with little temporal correlation (less smooth).  MAF then seeks an orthogonal projection $P = M^\top$ for $M \in \mathcal{O}^{d \times r}$ so as to maximize correlation between adjacent points $y_t, y_{t+\delta}$, yielding the objective
\begin{equation}
\label{eq:maf}
f_X(M) = \rho( y_t , y_{t+\delta} ) =  \frac{E( y_t ^\top y_{t+\delta} )}{ \sqrt{E(y_t^2)E(y_{t+\delta}^2)}} = \frac{ E(x_t^\top MM^\top x_{t+\delta}^{}) }{E(x_t^\top MM^\top x_t^{})} = \frac{ \mathrm{tr}(M^\top \Sigma_\delta M)}{\mathrm{tr}(M^\top  \Sigma M)}, 
\end{equation}
\noindent  where $\Sigma$ is the empirical covariance of the data $E(x_t^{} x_t^\top) = \frac{1}{n}XX^\top$ and $\Sigma_\delta$ is the symmetrized empirical cross-covariance of the data evaluated at a one-step time lag  $\Sigma_\delta = \frac{1}{2} \left( E(x_{t+\delta}^{} x_t^\top ) + E(x_{t}^{} x_{t+\delta}^\top ) \right)$.  
This objective results in the linear dimensionality program
\begin{equation}
\label{eq:maf2}
\begin{aligned}
& \text{maximize}
& &  \frac{ \mathrm{tr}(M^\top \Sigma_\delta M)}{\mathrm{tr}(M^\top  \Sigma M)}\\
& \text{subject to}
&& M \in \mathcal{O}^{d \times r}.
\end{aligned}
\end{equation}
\noindent Note again the appearance of the quotient-of-traces objective (as in LDA and CCA).  Indeed, the same heuristic (solving the trace-of-quotient problem) is typically applied to MAF, which results in the standard choice of the top eigenvectors of  $\Sigma^{-1} \Sigma_\delta$ as the solution to Equation \ref{eq:maf2}.  Though correct in the $r=1$ case, this misconception is incorrect for precisely the same reasons as above with LDA, and its use results in the same pitfalls.   Directly solving the manifold optimization of Equation \ref{eq:1} presents a more straightforward option.

MAF can be seen as a method balancing the desire for cross-covariance ($\Sigma_\delta$) of the data without overcounting data that has high power (the denominator containing $\Sigma$).  Indeed, such methods have been invented with slight variations in various application areas (e.g., \citealp{cunningham2014dimensionality}).
For example, one might simply ask to maximize the cross-covariance $E( y_t^\top y_{t+\delta} )$ rather than the correlation itself.  Doing so results in a simpler problem than Equation \ref{eq:maf2}: maximize $\mathrm{tr}( M^\top \Sigma_\delta M )$ for $M \in \mathcal{O}^{d \times r}$. 
In this case the eigenvector solution is optimal.  Second, we may want to maximize (or minimize, as in \cite{turner2007}) the squared distance between projected points; the objective then becomes $E( || y_{t+\delta} - y_t ||^2 )$, which through a similar set of steps produces the similar eigenvalue problem $\mathrm{tr}\left( M^\top (\Sigma - \Sigma_\delta )  M \right)$ for $M \in \mathcal{O}^{d \times r}$.
 This last choice is a discrete time analog of a more popular method---slow feature analysis---which we discuss in the next section.
Third, one might want to specify a particular form of temporal structure in terms of a dynamics objective $f_X(M)$, and seek linear projections containing that structure.   The advantage of such an approach is that one can specify a range of dynamical structures well beyond the statistics captured by an autocorrelation matrix.  A recent simple example is \cite{churchland2012}, who sought a linear subspace of the data where linear dynamics were preserved, namely an $M$ minimizing $f_X(M) = ||\dot{X} - MDM^\top X||^2_F$ for some dynamics matrix $D\in \reals^{r \times r}$.  This objective is but one simple choice of dynamical structure; given the canonical autonomous system $\dot{y} = g(y) + \epsilon$, one might similarly optimize $f_X(M) = || M^\top \dot{X} - g(M^\top X) ||_F^2$.  Optimizing such a program finds the projection of the data that optimally expresses that dynamical feature of interest, without danger of artificially creating that structure based on a strong prior model (as is possible in state-space models like the Kalman filter).

%%%%%%%%%%%%%%%%%%%%%%%%%%%%%%%%%%%%%%%%%%%%%
\subsubsection{Slow Feature Analysis}
\label{sec:sfa}

Similar in spirit to MAF, slow feature analysis (SFA; \citealp{wiskott2002, wiskott2003}) is a linear dimensionality reduction technique developed to seek invariant representations in object recognition problems.  SFA assumes that measured data, such as pixels in a movie, can have rapidly changing values over time, whereas the identity, pose, or position of the underlying object should move much more slowly.  Thus, recovering a slowly moving projection of data may produce a meaningful representation of the true object of interest.   Accordingly, assuming access to derivatives
$\dot{X} = [\dot{x}_1 , ... , \dot{x}_n]$, SFA minimizes the trace of the covariance of the projection $\mathrm{tr}(\dot{Y}\dot{Y}^\top) = \mathrm{tr}(M^\top \dot{X}\dot{X}^\top M)$.  This objective is PCA on the derivative data:
\begin{equation}
\label{eq:sfa}
\begin{aligned}
& \text{minimize}
& &  \mathrm{tr}\left(M^\top \dot{X}\dot{X}^\top M\right)\\
& \text{subject to}
&& M \in \mathcal{O}^{d \times r}.
\end{aligned}
\end{equation}
Linear SFA is the most straightforward case of the class of SFA methods.  Several additional choices are typical in SFA implementations, including: \emph{(i)} data points  $x_t \in X$ are usually expanded nonlinearly via some feature mapping $h : \reals^{d} \rightarrow \reals^{p}$ for some $p > d$ (a typical choice is all monomials of degree one and two to capture linear and quadratic effects); and \emph{(ii)} data are whitened to prevent the creation of structure due to the mapping $h(\cdot)$ alone before the application of the PCA-like program in Equation \ref{eq:sfa}.   A logical extension of this nonlinear feature space mapping is to consider a reproducing kernel Hilbert space mapping, as has indeed been done \cite[]{bray2002}.  \cite{turner2007} established the connections between SFA and linear dynamical systems, giving a probabilistic interpretation of SFA that also makes different and interesting connections of this method to PCA and its probabilistic counterpart (Section \ref{sec:ppca}).

%%%%%%%%%%%%%%%%%%%%%%%%%%%%%%%%%%%%%%%%%%%%%
\subsubsection{Sufficient Dimensionality Reduction}
\label{sec:sdr}

Consider a supervised learning problem with high dimensional covariates $X \in \reals^{d \times n}$ and responses $Z \in \reals^{\ell \times n}$.  The concept behind sufficient dimensionality reduction is to find an orthogonal projection of the data $Y = M^\top X \in \reals^{r \times n}$ such that the reduced-dimension points $Y$ capture all statistical dependency between $X$ and $Z$.  Thus, sufficient dimensionality reduction (SDR) is a problem of feature selection that seeks an $M \in \mathcal{O}^{d \times r}$ which makes covariates and responses conditionally independent:
\begin{equation}
\label{eq:ci}
p_{Z|X}(z|x) = p_{Z|M^\top X}(z|M^\top x) ~~\iff ~~  Z \perp\!\!\!\perp X | M^\top X.
\end{equation}
SDR is in fact a class of methods, as there are a number of ways one might derive an objective for such a conditional independence relationship.  Particularly popular in machine learning is the use of kernel mappings to characterize the conditional independence relationship of Equation \ref{eq:ci} \cite[]{fukumizu2004dimensionality, fukumizu2009kernel, nilsson2007regression}.  The essential idea in these works is to map covariates $X$ and responses $Z$ into reproducing kernel Hilbert spaces, where it has been shown that, for universal kernels, cross-covariance operators can be used to determine conditional independence of $X$ and $Z$ \cite[]{fukumizu2004dimensionality, gretton2012kernel, gretton2005measuring}.   Such an approach induces the cost function on the projection
\begin{equation}
\label{eq:sdr0}
f_{X}(M) = J( Z , M^\top X ) := \mathrm{tr}\left( \bar{K}_Z \left( \bar{K}_{M^\top X} + n\epsilon I \right)^{-1} \right),
\end{equation}
\noindent where $\bar{K}_Z = \left(I - \frac{1}{n} 1 1^\top \right) K_Z \left(I - \frac{1}{n} 1 1^\top \right)$ is the centered Gram matrix $K_Z = \{k(z_i,z_j)\}_{ij}$ (and similar for $\bar{K}_{M^\top X}$).   Critically, this cost function is provably larger than $J( Z , X )$, with equality if and only if the desired conditional independence of Equation \ref{eq:ci} holds.  Thus, we have the following linear dimensionality reduction program: 
\begin{equation}
\label{eq:sdr}
\begin{aligned}
& \text{minimize}
& &  \mathrm{tr}\left( \bar{K}_Z \left( \bar{K}_{M^\top X} + n\epsilon I \right)^{-1} \right)\\
& \text{subject to}
&& M \in \mathcal{O}^{d \times r}.
\end{aligned}
\end{equation}
SDR has been extended to the unsupervised case \cite[]{wang2010unsupervised} and has been implemented with other objectives such as the Hilbert-Schmidt independence criterion \cite[]{gretton2005measuring}.  An important review of non-kernel SDR techniques is \cite{adragni2009}, in addition to earlier work \cite[]{li1991sliced}.

%%%%%%%%%%%%%%%%%%%%%%%%%%%%%%%%%%%%%%%%%%%%%
\subsubsection{Locality Preserving Projections}
\label{sec:lpp}

All methods considered thus far stipulate objectives based on global loss functions, which can be sensitive to outliers and can be significantly distorted by nonlinear structure in the data.  A popular alternative throughout machine learning is to consider local neighborhood structure.  In the case of dimensionality reduction, considering locality often amounts to constructing a neighborhood graph of the training data, and using that graph to define the loss function.  Numerous \emph{nonlinear} methods have been proposed along these lines (see Section \ref{sec:outside}), and this development  has led to a few important \emph{linear} methods that consider local structure. 
First, locality preserving projections (LPP) \cite[]{niyogi2004locality} is a direct linear interpretation of Laplacian Eigenmaps \cite[]{belkin2003}.  LPP begins by defining a graph with each data point $x_i \in \reals^d$ as a vertex, connecting $x_i$ and $x_j$ with the edge $\delta_{i,j}$ if these points are in the same $\epsilon$ neighborhood (that is, $||x_i -x_j|| < \epsilon$).  A kernel (typically the squared exponential kernel) is then used to weight the existing edges.   The cost of the reconstruction $y_i = P x_i$ is then
\begin{equation}
\overset{n}{\sum_{i=1}}\overset{n}{\sum_{j=1}}  || P x_i - P x_j ||^2_2 W_{ij} ~~, ~~~\mathrm{where} ~~~ W_{ij} = \delta_{i,j} \exp\left\{ -\frac{1}{\tau} ||x_i - x_j ||^2_2 \right\}.
\end{equation}
Through a few standard steps (see for example \citealp{belkin2003,niyogi2004locality}), this objective results in the linear dimensionality objective
\begin{equation}
\label{eq:lpp}
\begin{aligned}
& \text{minimize}
& &  \mathrm{tr}\left(P X L X^\top P^\top \right) \\
& \text{subject to}
&& P X D X^\top P^\top = I, \\
\end{aligned}
\end{equation}
\noindent where the matrix $D$ is diagonal with the column sums of $W$, namely $D_{ii} = \sum_j W_{ij}$, and $L = D-W$ is the Laplacian matrix. Note that this constraint set is sometimes called a \emph{flag} matrix manifold.   As in Traditional CCA (Section \ref{sec:cca}), LPP can be solved to produce the matrix $P$ with columns equal to the generalized eigenvectors $v_i$ satisfying $XLX^\top v_i = \lambda_i XDX^\top v_i$, by implicitly solving an orthogonally constrained optimization over $M = (XDX^\top)^{1/2} P^\top \in \mathcal{O}^{d \times r}$: 
\begin{equation}
\label{eq:sdr1}
\begin{aligned}
& \text{minimize}
& &  \mathrm{tr}\left( M^\top (XDX^\top)^{-\top/2} XLX^\top (XDX^\top)^{-1/2} M \right)\\
& \text{subject to}
&& M \in \mathcal{O}^{d \times r}.
\end{aligned}
\end{equation}
Note again that the resulting linear mapping $Y = PX = M^\top (XDX^\top)^{-\top/2} X$ is not an orthogonal projection.    

Related to LPP, neighborhood preserving embedding (NPE) \cite[]{he2005neighborhood} has a largely parallel motivation.  NPE is a linear analogue to locally linear embedding \cite[]{roweis2000} that produces a different linear approximation to the Laplace Beltrami operator, resulting in the objective
\begin{equation}
\label{eq:lpp2}
\begin{aligned}
& \text{minimize}
& &  \mathrm{tr}\left( M^\top (XX^\top)^{-\top/2} X(I-W)^\top(I-W) X^\top (XX^\top)^{-1/2} M \right)\\
& \text{subject to}
&& M \in \mathcal{O}^{d \times r},
%& &  \mathrm{tr}\left(P X (I-W)^\top(I-W) X^\top P^\top \right) \\
%& \text{subject to}
%&& P X X^\top P^\top = I, \\
\end{aligned}
\end{equation}
\noindent where the matrix $W$ is the same in LPP.  We group these methods together due to their similarity in motivation and resulting objective.

%%%%%%%%%%%%%%%%%%%%%%%%%%%%%%%%%%%%%%%%%%%%%
\subsection{Linear Dimensionality Reduction with Unconstrained Objectives}
\label{sec:nonorth}

All methods reviewed so far involve  orthogonal mappings, but several methods simplify further to an unconstrained optimization over matrices $M \in \reals^{d \times r}$.  We describe those linear dimensionality reduction methods here.

%%%%%%%%%%%%%%%%%%%%%%%%%%%%%%%%%%%%%%%%%%%%%
\subsubsection{Undercomplete Independent Component Analysis}
\label{sec:uica}

Independent Component Analysis (ICA; \citealp{icaBook}) is a massively popular class of methods that is often considered alongside PCA and other simple linear transformations.  ICA specifies the usual data $X \in \reals^{d \times n}$ as a mixture of unknown and independent sources $Y \in \reals^{r \times n}$.  Note the critical difference between the independence requirement and the uncorrelatedness of PCA and other methods: for each source data point $y = [y^1,...,y^r]^\top \in \reals^r$ (one column of $Y$), independence implies $p(y) \approx \prod_{j=1}^r p\left(y^j\right)$, where the $p\left(y^j\right)$ are the univariate marginals of the low dimensional data (sources).
  
ICA finds the demixing matrix $P$ such that we recover the independent sources as $Y = P X$.  The vast majority of implementations and presentations of ICA deal with the dimension preserving case of $r=d$, and indeed most widely used algorithms require this parity.  In this case, ICA is not a dimensionality reduction method.    

Our case of interest for dimensionality reduction is the `undercomplete' case where $r < d$, in which case $Y = P X$ is a linear dimensionality reduction method according to Definition \ref{dfn:ldr}.  Interestingly, the most common approach to undercomplete ICA is to preprocess the mixed data $X$ with PCA (e.g., \citealp{joho2000}), reducing the data to $r$ dimensions, and running a standard square ICA algorithm.   That said, there are a number of principled approaches to undercomplete ICA, including \cite[]{stone1998, zhang1999, amari1999, de2002texture, welling2004}.    All of these models necessarily involve a probabilistic model, required by the independence of the sources.    As an implementation detail, note that observations $X$ are whitened as a preprocessing step.  

With this model, authors have maximized the log-likelihood of a generative model \cite[]{de2002texture} or minimized the mutual information between the sources \cite[]{stone1998, amari1999, zhang1999}, each of which requires an approximation technique. 
   \cite{welling2004} describes an exact algorithm for maximizing the log-likelihood of a product of experts objective
\begin{equation}
\label{eq:uica}
f_X(M) ~~ = ~~ \frac{1}{n} \overset{n}{\sum_{i=1}} \log p(x_i) ~~ \propto ~~ \frac{1}{2} \log | M^\top M | + \frac{1}{n} \overset{n}{\sum_{i=1}}  \overset{r}{\sum_{k=1}} \log p_\theta\left( m_k^\top x_i^{} \right),
\end{equation}   
where $m_k$ are the (unconstrained) columns of $M$, and $p_\theta(\cdot)$ is a likelihood distribution (an ``expert") parameterized by some $\theta_k$.  Thus this undercomplete ICA, as an optimization program like Equation \ref{eq:1}, is a simple unconstrained maximization of $f_X(M)$ over $M \in \reals^{d\times r}$.   

Extensions of ICA are numerous.  Insomuch as undercomplete ICA is a special case of ICA, many of these extensions will also be applicable in the undercomplete case; see the reference \cite{icaBook}.

%%%%%%%%%%%%%%%%%%%%%%%%%%%%%%%%%%%%%%%%%%%%%
\subsubsection{Probabilistic PCA}
\label{sec:ppca}

 One often-noted shortcoming of PCA is that it partitions data into orthogonal signal (the $r$-dimensional projected subspace) and noise (the $(d-r)$-dimensional nullspace of $M^\top$).   Furthermore, PCA lacks an explicit generative model.  Probabilistic PCA  (PPCA; \citealp{tipping99, roweisSPCA, theobald75}) adds a prior to PCA to address both these potential concerns, treating the high-dimensional data to be a linear mapping of the low-dimensional data (plus noise).  If we stipulate some latent independent, identically distributed $r$-dimensional data $y_i \sim \mathcal{N}(0,I_r)$ for $i \in \{1,...,n\}$, and we presume that the high-dimensional data is a noisy linear mapping of that low-dimensional data $x_i | y_i \sim \mathcal{N}(My_i , \sigma_\epsilon^2 I )$ for some given or estimated noise parameter $\sigma_\epsilon^2$.  This model yields a natural objective with the total (negative log) data likelihood, namely
\begin{equation}
\label{eq:ppca1}
f_X(M) ~=~ -\log p(X | M) ~~\propto~~  \log | MM^\top +  \sigma_\epsilon^2 I | + \mathrm{trace} \left( (MM^\top +  \sigma_\epsilon^2 I )^{-1}  XX^\top \right).
\end{equation}
\noindent 
Mapping this onto our dimensionality reduction program, we want to minimize the negative log likelihood $f_X(M)$ over an arbitrary matrix $M \in \reals^{d \times r}$.
Appendix A of \cite{tipping99} shows that this objective can be minimized in closed form as $M = U_r(S_r - \sigma_\epsilon^2 I)^{\frac{1}{2}}$ where $\frac{1}{n}XX^\top = USU^\top$ is the singular value decomposition of the empirical covariance, and $U_r$ denotes the first $r$ columns of $U$ (ordered by the singular values).   \cite{tipping99} also show that the noise parameter $\sigma_\epsilon^2$ can be solved in closed form, resulting in a closed-form maximum likelihood solution to the parameters of PPCA.  This closed-form obviates a more conventional expectation-maximization (EM) approach \cite[]{dempster77}, though in practice EM is still used with the Sherman-Morrison-Woodbury matrix inversion lemma for computational advantage when $d\gg r$.
Under this statistical model, the low-dimensional mapping of the observed data is the mean of the posterior $p(Y | X)$, which also corresponds to the MAP estimator:  $Y = M^\top (MM^\top + \sigma_\epsilon^2 I )^{-1}X$, which again fits the form of linear dimensionality reduction $Y = PX$.

As with PCA, there are a number of noteworthy extensions to PPCA.   \cite{ulfarsson2008} add an $\ell_2$ regularization term to the PPCA objective.  This regularization can be viewed as placing a Gaussian shrinkage prior $p(M)$ on the entries of $M$, though the authors termed this choice more as a penalty term to drive a sparse solution.  A different choice of regularization is found in 
``Directed" PCA \cite[]{vanroy2011}, where a trace penalty on the inverse covariance matrix is added.    
Finally, more generally, several of the extensions noted in Section \ref{sec:pca} are also applicable to the probabilistic version.

%%%%%%%%%%%%%%%%%%%%%%%%%%%%%%%%%%%%%%%%%%%%%
\subsubsection{Factor Analysis}
\label{sec:fa}

Factor analysis (FA; \citealp{spearman1904}) has become one of the most widely used statistical methods, in particular in psychology and  behavioral sciences.   FA is a more general case of a PPCA model: the observation noise is fit per observation rather than across all observations, resulting in the following conditional data likelihood:  $x_i | y_i \sim \mathcal{N}(My_i , D )$ for a diagonal matrix $D$,  where the matrix $M$ is typically termed factor loadings.  This choice can be viewed as a means to add scale invariance to each measurement, at the cost of losing rotational invariance across observations.  Following the same steps as in PPCA, we arrive at the linear dimensionality reduction program
\begin{equation}
\label{eq:fa}
\begin{aligned}
& \text{minimize}
& &  \log | MM^\top + D | + \mathrm{trace}\left( (MM^\top +  D )^{-1}  XX^\top \right),
\end{aligned}
\end{equation}
\noindent which results in a similar linear dimensionality reduction mapping $Y=PX$ for $P =  M^\top (MM^\top + D )^{-1}$.   Unlike PPCA, FA has no known closed-form solution, and thus an expectation-maximization algorithm \cite[]{dempster77} or direct gradient method is typically used to find a (local) optimum of the log likelihood. Extensions similar to those for PPCA have been developed for FA (see for example \cite{vanroy2011}).

%%%%%%%%%%%%%%%%%%%%%%%%%%%%%%%%%%%%%%%%%%%%%
\subsubsection{Linear Regression}
\label{sec:lr}

Linear regression is one of the most basic and popular tools for statistical modeling.   Though not typically considered a linear dimensionality reduction method, this technique maps $d$-dimensional data onto an $r$-dimensional hyperplane defined by the number of independent variables.  Considering $d$-dimensional data $X$ as being partitioned into inputs and outputs $X = [X_{in};X_{out}]$ for inputs $X_{in} \in \reals^{r \times n}$ and outputs $X_{out} \in \reals^{(d-r) \times n}$, linear regression fits $X_{out} \approx MX_{in}$ for some parameters $M \in \reals^{(d-r)\times r}$.  The standard choice for fitting such a model is to minimize a simple sum-of-squared-errors objective $f_X(M) = || X_{out} - MX_{in} ||^2_F$, which leads to the least squares solution $M = X_{out}^{}X_{in}^\top (X_{in}^{}X_{in}^\top)^{-1}$.   In the form of Equation \ref{eq:1}, linear regression is
\begin{equation}
\label{eq:lr}
\text{minimize}
~~~ || X_{out} - MX_{in}||_F^2
\end{equation}
This model produces a regressed data set $\hat{X} = [X_{in}; MX_{in}] = [I;M]X_{in}$.   Note that $[I;M]$ has rank $r$ (the data lie on a $r$-dimensional subspace) and thus Definition \ref{dfn:ldr} applies.   To find the dimensionality reduction mapping $P$, we simply take the SVD $[I;M] = USV^\top$ and set $P = [SV^\top ~0 ]$ where $0$ is the $(d-r) \times (d-r)$ matrix of zeroes.   The low dimensional mapping of the original data $X$ then takes the standard form $Y = PX$.
 Chapter 3 of \cite{hastieBook} gives a thorough introduction to linear regression and points out (Equation 3.46) that the least squares solution can be viewed as mapping the output $X_{out}$ in a projected basis.  \cite{adragni2009} point out linear regression as a dimensionality reduction method in passing while considering the case of sufficient dimensionality reduction (see SDR, Section \ref{sec:sdr}, for more detail).  

An important extension to linear regression is regularization for bias-variance tradeoff, runtime performance, or interpretability of results.   The two most popular include adding an $\ell_2$ (ridge or Tikhonov regression) or an $\ell_1$ penalty (lasso), resulting in the objective
\begin{equation}
\label{eq:rlr}
\text{minimize}
~~~ || X_{out} - MX_{in}||_F^2  + \lambda ||M||_p
\end{equation}
for some penalty $\lambda$.  While the $\ell_2$ case can be solved in closed form as an augmented least squares, the $\ell_1$ case requires a quadratic program \cite[]{tibshirani1996}; though the simple quadratic program formulation scales poorly \cite[]{boyd2011distributed,bach2011convex}.  Regardless, both methods produce an analogous form as in standard linear regression, resulting in a linear dimensionality reduction $Y = PX$ for $P = [SV^\top ~0 ]$ as above.  

Another important extension, particularly given the present subject of dimensionality reduction, is principal components regression and partial least squares \cite[]{hastieBook}.   Principal components regression uses PCA to preprocess the input variables $X_{in} \in \reals^{r \times n}$  down to a reduced $\tilde{X}_{in} \in \reals^{\tilde{r} \times n}$, where $\tilde{r}$ is chosen by computational constraints, cross-validation, or similar.  Standard linear regression is then run on the resulting components.  This two-stage method (first PCA, then regression) can produce deeply suboptimal results, a shortcoming which to some extent is answered by partial least squares.  Partial least squares is another classical method that trades off covariance of $X_{in}$ (as in the PCA step of principal components regression) and predictive power (as in linear regression).  Indeed, partial least squares has been shown to be a compromise between linear regression and principal components regression, using the framework of continuum regression \cite[]{stone1990continuum}.   Even still, the partial least squares objective is heuristic and is carried out on $r$ dimensions in a greedy fashion.  \cite{bakir2004multivariate} approached the rank-$r$ linear regression problem directly, writing the objective in the form of Equation \ref{eq:1} as
\begin{equation}
\label{eq:ranklr}
\begin{aligned}
& \text{minimize}
& &  || X_{out}^{} - M_{out}^{} SM_{in}^\top X_{in}^{}||_F^2 \\
& \text{subject to}
&& M_{out} \in \mathcal{O}^{d_{out} \times r}\\
& && M_{in} \in \mathcal{O}^{d_{in} \times r},
\end{aligned}
\end{equation}
\noindent where $S$ is a nonnegative diagonal matrix, and the optimization program is over the variables $\left\{M_{in}, M_{out} , S \right\}$.  This method can again be solved as an example of Equation \ref{eq:1}.

%%%%%%%%%%%%%%%%%%%%%%%%%%%%%%%%%%%%%%%%%%%%%
\subsubsection{Distance Metric Learning}
\label{sec:dml}

Distance metric learning (DML) is an important class of machine learning methods that is typically motivated by the desire to improve a classification method.  Numerous algorithms---canonical examples include $k$-nearest neighbors and support vector machines---calculate distances between training points, and the performance of these algorithms can be improved substantially by a judicious choice of distance metric between these points.  Many objectives have been proposed to learn these distance metrics; a seminal work is \cite{xing2002distance}, and thorough surveys of this literature include \cite{kulis2012metric, yang2006distance, yang2007overview}.

In the linear case, to generalize beyond Euclidean distance, distance metric learning seeks a Mahalanobis distance $d_M(x_i,x_j) = ||M^\top x_i - M^\top x_j||_2 = ||x_i - x_j||_{MM^\top}$ that improves some objective on training data.  When $M \in \reals^{d \times d}$ is full rank, this approach is not a dimensionality reduction.  However, as is often noted in that literature, a lower rank $M \in \reals^{d \times r}$ for $r<d$ implies a linear mapping of the data to some reduced space where classification (or another objective) is hopefully improved, thus implicitly defining a linear dimensionality reduction method.    

Numerous methods have been introduced in the DML literature.   Here for clarity we survey one representative method in depth and incorporate other popular approaches from this literature  thereafter.  Large margin nearest neighbors (LMNN; \citealp{weinberger2005distance, torresani2006large, weinberger2009distance}) assumes labeled data: $(x_i,z_i)$, such that $z_i \in \{1,...,C\}$ for the $C$ data classes.  LMNN typically begins by identifying a target neighbor set $\eta(i)$ for each data point $x_i$, which, in the absence of side information, is simply the $k$ nearest neighbors belonging to the same class $z_i$ as point $x_i$.  The key intuition behind LMNN is that a distance metric $d_M(x_i,x_j)$ is desired such that target neighbors are pulled closer together than any points belonging to a different class, ideally with a large margin.  Accordingly, LMNN optimizes the objective
\begin{equation}
f_X(M) = \overset{n}{\sum_{i=1}} \overset{}{\sum_{j \in \eta(i)}}\left( d_M(x_i,x_j)^2 + \lambda  \overset{n}{\sum_{\ell = 1}} \mathds{1}(z_i \neq z_\ell)\left[ 1 + d_M(x_i,x_j)^2 - d_M(x_i,x_\ell)^2 \right]_+\right),
\end{equation} 
\noindent  where $\mathds{1}(\cdot)$ is the indicator function for the class labels $z_i,z_\ell$, and $[\cdot]_+$ is the hinge loss.  Intuitively, the first term of the right hand side pulls target neighbors closer together, while the second term penalizes (with weight $\lambda$) any points $x_\ell$ that are closer to $x_i$ than its target neighbors $x_j$ (plus some margin), and have a different label ($z_i \neq z_\ell$). 

As a dimensionality reduction technique, this objective is readily optimized over $M \in \reals^{d\times r}$, to produce a low dimensional mapping of the data $Y = M^\top X$.  Beyond LMNN, other prominent methods explore slightly different objectives with similar motivations.  Examples include relevant component analysis for DML \cite[]{bar2003learning}, neighborhood component analysis \cite[]{goldberger2004neighbourhood}, collapsing classes \cite[]{globerson2005metric}, discriminative component analysis \cite[]{peltonen2007fast}, latent coincidence analysis \cite[]{der2012latent}, and an online, large-scale method \cite[]{chechik2009online}.  Many of these works also offer kerneled extensions for nonlinear DML.

%%%%%%%%%%%%%%%%%%%%%%%%%%%%%%%%%%%%%%%%%%%%%
\subsection{Scope Limitations}
\label{sec:outside}

Definition \ref{dfn:ldr} limits  our scope and excludes a number of algorithms that could be considered dimensionality reduction methods.  Here we consider four prominent cases that fall outside the definition of linear dimensionality reduction.  

%%%%%%%%%%%%%%%%%%%%%%%%%%%%%%%%%%%%%%
\subsubsection{Nonlinear Manifold Methods}

The most obvious methods to exclude from linear dimensionality reduction are nonlinear manifold methods, the most popular of which include Local Linear Embedding \cite[]{roweis2000}, Isomap \cite[]{tenenbaum2000}, Laplacian eigenmaps \cite[]{belkin2003}, maximum variance unfolding \cite[]{weinberger2006}, t-distributed stochastic neighbor embedding \cite[]{van2008visualizing}, and diffusion maps \cite[]{coifman2006}.  These methods seek a nonlinear manifold by using local neighborhoods, geodesic distances, or other graph theoretic considerations.  Thus, while these methods are an important contribution to dimensionality reduction, they do not produce low-dimensional data as $Y = PX$ for any $P$.   It is worth noting that some of these problems, such as Laplacian eigenmaps, do involve a generalized eigenvector problem in their derivation, though typically those eigenproblems are the direct solution to a stated objective and not the heuristic that is more often seen in the linear setting (and that motivates the use of direct optimization).    
A concise introduction to nonlinear manifold methods is given in \cite{zhao2007}, an extensive comparative review is \cite{van2009}, and a  probabilistic perspective on many spectral methods is given in \cite{lawrence2012unifying}.  

%%%%%%%%%%%%%%%%%%%%%%%%%%%%%%%%%%%%%%
\subsubsection{Nonparametric Methods}

One might also consider classical methods from linear systems theory, like Kalman filtering or smoothing \cite[]{kalman60}, as linear dimensionality reduction methods.  Even more generally, nonparametric methods like Gaussian Processes \cite[]{rasmussenBook} also bear some similarity.  The key distinction with these algorithms is that our definition of linear dimensionality is parametric: $P \in \reals^{r \times d}$ is a fixed mapping and does not change across the data set or some other index.  Certainly any nonparametric method violates this restriction, as by definition the transformation mapping must grow with the number of data points.
 In the Kalman filter, for example, the mapping (which is indeed linear) between each point $x_i$ and its low-dimensional projection $y_i$ changes with each data point (based on all previous data), so in fact this method is also a nonparametric mapping that grows with the number of data points $n$.  This same argument applies to most state-space models and subspace identification methods, including the linear quadratic regulator, linear quadratic Gaussian control, and similar.   Hence these other classic methods also fall outside the scope of linear dimensionality reduction.

%%%%%%%%%%%%%%%%%%%%%%%%%%%%%%%%%%%%%%
\subsubsection{Matrix Factorization Problems}
\label{sec:mf}

A few methods discussed in this work have featured matrix factorizations, and indeed there are many other methods that involve such a decomposition in areas like indexing and collaborative filtering.  This general class certainly bears similarity to dimensionality reduction, in that it uses a lower dimensional set of factors to reconstruct noisy or missing high-dimensional data (for example, classical latent semantic indexing is entirely equivalent to PCA; \citealp{deerwester1990}).   A common factorization objective is to find $ H \in \reals^{d \times r} $ and $Y \in \reals^{r \times n}$ such that the product $HY$ reasonably approximates $X$ according to some criteria.  The critical difference between these methods and linear dimensionality reduction is that these methods do not in general yield a sensible linear mapping $Y= PX$, but rather the inverse mapping from low-dimension to high-dimension.  While this may seem a trivial and invertible distinction, it is not: specifics of the method often imply that the inverse mapping is nonlinear or ill-defined.   To demonstrate why this general class of problem falls outside the scope of linear dimensionality reduction, we detail two popular examples: nonnegative matrix factorization and matrix factorization as used in collaborative filtering.

Nonnegative matrix factorization (NMF; \citealp{lee1999}; sometimes called multinomial PCA; \citealp{buntine02}), 
solves the objective $f_X(H,Y) = || X - HY ||$ for a nonnegative linear basis  $H \in \reals^{d \times r}_+$ and a nonnegative low-dimensional mapping $Y \in \reals^{r \times n}_+$.  The critical difference with our construction is that NMF is not linear: there is no $P$ such that $Y=PX$ for all points $x_i$.  If we are given $H$ and a test point $x_i$,  we must do the nonlinear solve $y_i = \mathrm{argmin}_{y \geq 0} || x_i - Hy ||_2$.  A simple counterexample is to take an existing point $x_j$ and its nonnegative projection $y_j$ (which we assume is not zero).  If we then test on $-x_j$, certainly we can not get $-y_j$ as a valid nonnegative projection.

A second example is the broad class of matrix factorization problems as used in collaborative filtering, which includes weighted low-rank approximations \cite[]{srebro2003}, maximum margin matrix factorization \cite[]{srebro2004, rennie2005}, probabilistic matrix factorization \cite[]{mnih2007}, and more.  As above, collaborative filtering algorithms approximate data $X$ with a low-dimensional factor model $HY$.  However, the goal of collaborative filtering is to fill in the missing entries of $X$ (e.g., to make movie or product recommendations), and indeed the data matrix $X$ is usually missing the vast majority of its entries.   Thus, not only is there no explicit dimensionality reduction $Y=PX$, but that operation is not even well defined for missing data.  

More broadly, there has been a longstanding literature in linear algebra of low rank approximations and matrix nearness problems, often called Procrustes problems  \cite[]{higham1989, li2011, ruhe1987, schonemann1966}.   These optimization programs have the objective $f_X(M) = || X - M ||$ for some norm (often a unitarily invariant norm, most commonly the Frobenius norm) and some constrained, low-rank matrix $M$.  PCA would be an example, considering $X$ as the data (or the covariance) and $M$ as the $r$-rank approximation thereof.  While a few linear dimensionality reduction methods can be written as Procrustes problems, not all can, and thus nothing general can be claimed about the connection between Procrustes problems and the scope of this work. 

%%%%%%%%%%%%%%%%%%%%%%%%%%%%%%%%%%%%%%
\subsection{Summary of the Framework}
Table \ref{tbl:results} offers a consolidated summary of these methods.  Considering linear dimensionality reduction through the lens of a constrained matrix optimization enables a few key insights.  First, as is the primary purpose of this paper, this framework surveys and consolidates the space of linear dimensionality reduction methods.  It clarifies that linear dimensionality reduction goes well beyond PCA and can require much more than simply eigenvalue decompositions, and also that many of these methods bear significant resemblance to each other in spirit and in detail.   Second, this consolidated view suggests that, since optimization programs over well-understood matrix manifolds address a significant subclass of these methods, an objective-agnostic solver over matrix manifolds may provide a useful generic solver for linear dimensionality reduction techniques.  
%
%%%%%%%%%%%%%%%%%%%%%%%%%%%%%%%%%%%%%%
\begin{table*}[]
\scriptsize
\centering
\begin{tabular}[b]{p{1.9cm}ccc}\\[-7pt]
\hline\hline\\[-7pt]
Method & Objective $f_X(M)$ & Manifold $\mathcal{M}$ & Mapping $Y=PX$  \\[2pt]
\hline\\[-8pt]

\\[-11pt] \\
PCA (\S \ref{sec:pca}) & $|| X - MM^{\top}X||_F^2$ & $\mathcal{O}^{d \times r}$ & $M^\top X$ \\

\\[-9pt] \\
MDS (\S \ref{sec:mds}) & $  \sum_{i,j} \big( d_X(x_i,x_j) -  d_Y(M^\top x_i, M^\top x_j) \big)^2$ & $\mathcal{O}^{d \times r}$ & $M^\top X$  \\

\\[-9pt] \\
LDA (\S \ref{sec:lda})&   $\frac{ \mathrm{tr}(M^\top \Sigma_B M)}{\mathrm{tr}(M^\top  \Sigma_W M)}$ & $\mathcal{O}^{d \times r}$ &   $M^\top X$  \\

\\[-11pt] \\
\parbox{1.9cm}{Traditional \\ CCA (\S \ref{sec:cca})}&  {\ssmall $\mathrm{ tr}\left( M_a^\top (X_a^{} X_a^\top)^{-1/2}X_a^{}X_b^\top (X_b^{} X_b^\top)^{-1/2} M_b^{} \right)$ }  & $\mathcal{O}^{d_a \times r} \times \mathcal{O}^{d_b \times r}$   & \parbox{2.5cm}{$M_a^\top\left(X_a^{}X_a^\top\right)^{-1/2}X_a^{}, $\\ $M_b^\top\left(X_b^{}X_b^\top\right)^{-1/2} X_b^{}$} \\

\\[-11pt] \\
\parbox{2cm}{Orthogonal \\ CCA (\S \ref{sec:cca})}&  $\frac{ \mathrm{tr}\left(M_a^\top X_a^{}X_b^\top M_b^{}\right)}{\sqrt{\mathrm{tr}\left(M_a^\top  X_a^{}X_a^\top M_a^{}\right)\mathrm{tr}\left(M_b^\top  X_b^{}X_b^\top M_b^{}\right)}}$ & $\mathcal{O}^{d_a \times r} \times \mathcal{O}^{d_b \times r}$   & $M_a^\top X_a^{}$ , $M_b^\top X_b^{}$ \\

\\[-9pt] \\
MAF (\S \ref{sec:maf}) &   $\frac{ \mathrm{tr}(M^\top \Sigma_\delta M)}{\mathrm{tr}(M^\top  \Sigma M)}$ & $\mathcal{O}^{d \times r}$ & $M^\top X$  \\

\\[-9pt] \\
SFA (\S \ref{sec:sfa}) &   $\mathrm{tr}(M^\top \dot{X}\dot{X}^\top M)$ & $\mathcal{O}^{d \times r}$ & $M^\top X$  \\

\\[-9pt] \\
SDR (\S \ref{sec:sdr}) &   $\mathrm{tr}\left( \bar{K}_Z \left( \bar{K}_{M^\top X} + n\epsilon I \right)^{-1} \right)$
 & $\mathcal{O}^{d \times r}$ & $M^\top X$  \\

\\[-9pt] \\
LPP (\S \ref{sec:lpp}) &   $\mathrm{tr}\left( M^\top (XDX^\top)^{-\top/2} XLX^\top (XDX^\top)^{-1/2} M \right)$
 & $\mathcal{O}^{d \times r}$ & $M^\top (XDX^\top)^{-\top/2} X$  \\

\\[-9pt] \\
UICA (\S \ref{sec:uica}) &{\scriptsize $\frac{1}{2} \log | M^\top M | + \frac{1}{n} \sum_{i=1}^n \sum_{k=1}^r \log f_\theta\left( m_k^\top x_n \right)$ } & $\reals^{d \times r}$ & $M^{\top}X$ \\

\\[-9pt] \\
PPCA (\S \ref{sec:ppca}) & {\scriptsize $\log| MM^{\top} + \sigma^2I | + \text{tr}\left(XX^{\top}(MM^{\top} + \sigma^2I )^{-1}\right)$ } & $\reals^{d \times r}$ & $M^{\top}(MM^{\top} + \sigma^2 I)^{-1}X$ \\

\\[-9pt] \\
FA  (\S \ref{sec:fa})&   {\scriptsize $\log| MM^{\top} + D | + \text{tr}\left(XX^{\top}(MM^{\top} + D )^{-1}\right)$ }& $\reals^{d \times r}$ & $M^{\top}(MM^{\top}+ D)^{-1}X $ \\

\\[-9pt] \\
LR (\S \ref{sec:lr})   &  $|| X_{out} - MX_{in}||_F^2 + \lambda || M||_p $ & $\reals^{d \times r}$ & $ SV^\top X_{in}$ for $M=USV^\top$ \\
\\[-11pt] \\

\\[-15pt] \\
DML (\S \ref{sec:dml})   & 
\parbox{5.9cm}{\centering  {\ssmall $\sum_{i,j \in \eta(i)}\Big\{ d_M(x_i,x_j)^2 +  \lambda  \sum_\ell \mathds{1}(z_i \neq z_\ell)$
\\ $  \left[ 1 + d_M(x_i,x_j)^2 - d_M(x_i,x_\ell)^2 \right]_+\Big\} $ } } & $\reals^{d \times r}$ &  $M^\top X$ \\
\\[-12pt] \\

\hline\\[-7pt]
\hline\\[-7pt]

\end{tabular}
\caption{Summary of linear dimensionality reduction methods.}
\label{tbl:results}
\end{table*}
%%%%%%%%%%%%%%%%%%%%%%%%%%%%%%%%%%%%%%

%%%%%%%%%%%%%%%%%%%%%%%%%%%%%%%%%%%%%%
\section{Results}
\label{sec:examples}

All methods considered here have specified $\mathcal{M}$ as either unconstrained matrices or matrices with orthonormal columns, variables in the space $\reals^{d \times r}$.   In the unconstrained case, numerous standard optimizers can and have been brought to bear to optimize the objective $f_X(M)$.  In the orthogonal case, we have also claimed that the very well-understood geometry of the manifold of orthogonal matrices enables optimization over these manifolds.    Pursuing such approaches is critical to consolidating and extending dimensionality reduction, as orthogonal projections $Y= M^\top X$ for $M \in \mathcal{O}^{d \times r}$ are arguably the most natural formulation of linear dimensionality reduction: one seeks a low-dimensional view of the data where some feature is optimally preserved.

The matrix family $\mathcal{O}^{d \times r}$ is precisely the real Stiefel manifold, which is a compact, embedded submanifold of $\reals^{d \times r}$.  In our context, this means that many important intuitions of optimization can be carried over onto the Stiefel manifold.  Notably, with a differentiable objective function $f_X(M)$ and its gradient $\nabla_M f$,  one can carry out standard first order optimization via a projected gradient method, where the unconstrained gradient is mapped onto the Stiefel manifold for gradient steps and line searches.   Second order techniques also exist, with some added complexity.    The foundations of these techniques are \cite{luenberger1972, gabay1982}, both of which build on classic and straightforward results from differential geometry.  More recently, \cite{edelman1998} sparked significant interest in optimization over matrix manifolds.  Some relevant examples include \cite{manton2002, manton2004, fiori2005, nishimori2005, abrudan2008, ulfarsson2008, srivastava2005, srivastava2010, varshney2011}.  Indeed, some of these works have been in the machine learning community \cite[]{fiori2005, ulfarsson2008, varshney2011}, and some have made the connection of geometric optimization methods to PCA \cite[]{srivastava2005, ulfarsson2008, srivastava2010, varshney2011}.   
The basic geometry of this manifold, as well as optimization over Riemannian manifolds, has been often presented and is now fairly standard.  For completeness, we include a primer on this topic in Appendix \ref{appdx:stiefel}.  There, as a motivating example, we derive the tangent space, the projection operation, and a retraction operation for the Stiefel manifold.   Appendix \ref{appdx:stiefel} then includes Algorithm \ref{alg:opt}, which uses these objects to present an optimization routine that performs gradient descent over the Stiefel manifold.  For a thorough treatment, we refer the interested reader to the excellent summary of much of this modern work \cite[]{absilBook}.

One important technical note warrants mention here.  The Stiefel manifold is the manifold of all ordered $r$-tuples of orthonormal vectors in $\reals^d$, but in some cases the dimensionality reduction objective $f_X(\cdot)$ evaluates only the subspace (orthonormal basis) implied by $M$, not the particular choice and order of the orthonormal vectors in $M$.  This class of objective functions is precisely those functions $f_X(M)$ such that, for any $r \times r$ orthogonal matrix $R$, $f_X(M) = f(MR)$.   The implied constraint in these cases is the manifold of rank-$r$ subspaces in $\reals^d$, which corresponds to the real Grassmann manifold $\mathcal{G}^{d \times r}$ (another very well understood manifold).  As a clarifying example, note that the PCA objective is redundant on the Stiefel manifold: if we want the highest variance $r$-dimensional projection of our data, the parameterization of those $r$ dimensions is arbitrary, and indeed $f(M) = || X - MM^\top X ||_F^2 = f(MR)$ for any orthogonal $R$.  If one is particularly interested in ranked eigenvectors, there are standard numerical tricks to break this equivalence and produce an ordered result: for example, maximizing $\text{tr}(AM^\top XX^\top M)$ over the Stiefel manifold, where $A$ is any diagonal matrix with ordered elements ($A_{11} > ... > A_{rr}$).  From the perspective of optimization and linear dimensionality reduction, the difference between the Grassmann and Stiefel manifold is one of identifiability.  Since there is an uncountable set of Stiefel points corresponding to a single Grassmann point, it seems sensible for many reasons to optimize over the Grassmann manifold when possible (though, as our results will show, this distinction empirically mattered very little).  Indeed, most of the optimization literature noted above also deals with the Grassmann case, and the techniques are similar.  Conveniently, an objective $f_X(M)$ can be quickly tested for the true implied manifold by comparing values of $f_X(MR)$ for various $R$.  Because the end result is still a matrix $M \in \mathcal{O}^{d \times r}$ (which happens to be in a canonical form in the Grassmann case), this fact truly is an implementation detail of the algorithm, not a fundamental distinction between different linear dimensionality reduction methods.  Thus, we present our results as agnostic to this choice, and we empirically revisit the question of identifiability at the end of this section.   

To demonstrate the effectiveness of these optimization techniques, we implemented a variety of linear dimensionality reduction methods with several solvers: first order steepest descent methods over the Stiefel and Grassmann manifolds, and second order trust region methods over the Stiefel and Grassmann manifolds \cite[]{absilBook}.  We implemented these methods in MATLAB, both natively for first order methods, and using the excellent {\tt manopt} software library \cite[]{boumal2014manopt} for first and second order methods (all code is available at {\tt http://github.com/cunni/ldr}).  All of these solvers accept, as input, data $X$ and any function that evaluates a differential objective $f_X(M)$ and its gradient $\nabla_M f$ at any point $M \in \mathcal{O}^{d \times r}$, and return, as output, an orthogonal $M$ that corresponds to a (local) optimum of the objective $f_X(M)$.  

%%%%%%%%%%%%%%%%%%%%%%%%%%%%%%%%%%%%
\subsection{Example of Eigenvector Suboptimality}
\begin{figure}
\centering
\includegraphics[width=6.0in]{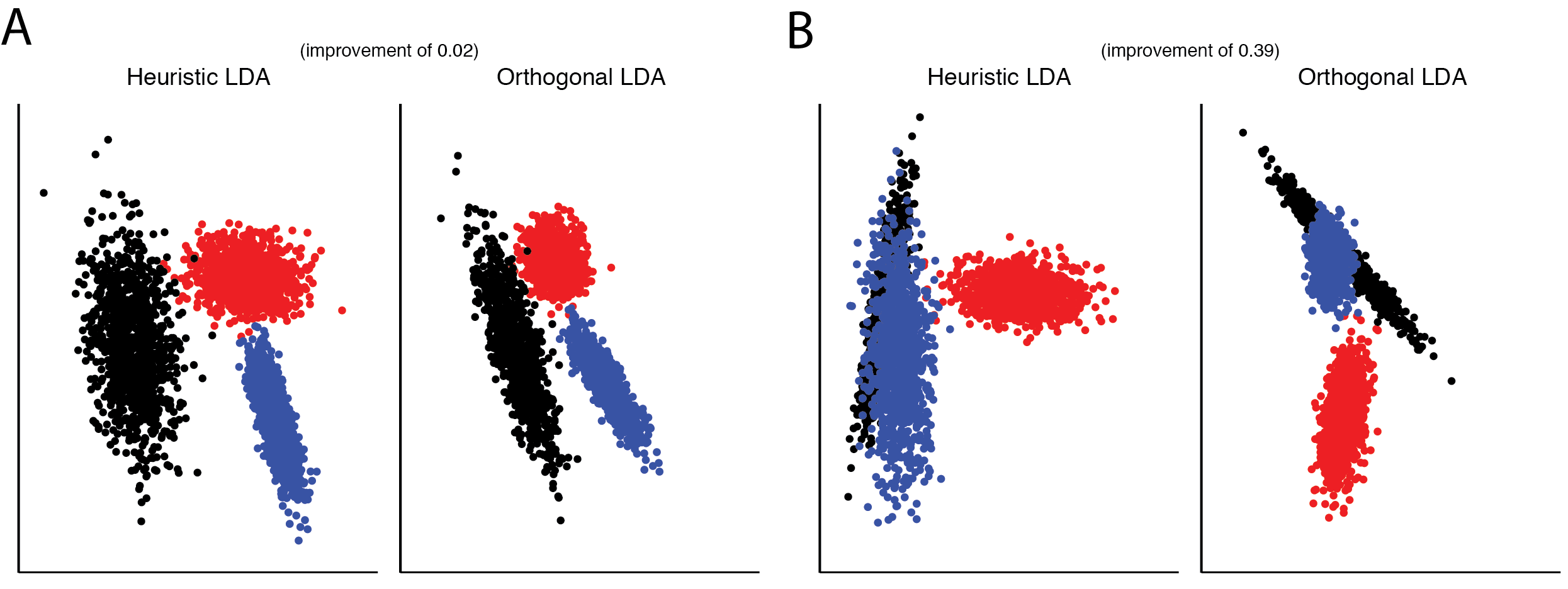}
\caption{Cautionary example of differences in objectives for LDA.  Panel A shows a data set that offers only marginal performance gain by using manifold optimization (Orthogonal LDA, right subpanel of panel A) rather than the traditional eigenvector heuristic (Heuristic LDA, left subpanel).  Panel B shows a data set that has a stark difference between the two methods.  The measured performance difference (see Equation \ref{eq:metric}) is shown.}
\label{fig:lda}
\end{figure}
We have cautioned throughout the above survey about the suboptimality of heuristic eigenvector solutions.  Figure \ref{fig:lda} demonstrates this suboptimality for LDA (Section \ref{sec:lda}).   In each panel (A and B), we simulated data of dimensionality $d=3$, with $n=3000$ points, corresponding to $1000$ points in each of $3$ clusters (shown in black, blue, and red).  Data in each cluster were normally distributed with random means (normal with standard deviation $5/2$) and random covariance (uniformly distributed orientation and exponentially distributed eccentricity with mean $5$).
% uniform is the Haar invariant measure on the stiefel manifold.  see "Statistics on Stiefel manifolds Gabriel Camano-Garcia" 4.4.2 or Chikuse book (springer 2003)
 In the left subpanel of panel A, we then calculated the $r=2$ dimensional projection by orthogonalizing the top two eigenvectors of the matrix $\Sigma_W^{-1}\Sigma_B^{}$ (`Heuristic LDA').  In the right subpanel, we directly optimized the objective of Equation \ref{eq:lda} over $\mathcal{O}^{d \times r}$ (`Orthogonal LDA').  We calculate the normalized improvement of the manifold method as
\begin{equation}
\label{eq:metric}
-\frac{\left( f_X\left(M^{(orth)}\right) - f_X\left(M^{(eig)}\right) \right)}{ \left | f_X\left(M^{(eig)} \right) \right |}.
\end{equation}
Throughout the results we will call the results of traditional eigenvector approaches $M^{(eig)}$ and the results of our manifold solver $M^{(orth)}$.  Figure \ref{fig:lda}A shows an example where both the heuristic and manifold optimization methods return qualitatively similar results, and indeed the numerical improvement ($0.02$) reflects that indeed this heuristic is by no means wildly inappropriate for the stated objective.  Indeed, we know it to be correct for $r=1$.   Figure \ref{fig:lda}B shows a particularly telling example: both methods distinguish the red cluster easily, whereas the heuristic method confounds the black and blue clusters, while the optimization approach offers better separability, which indeed correlates with improvement on the stated objective of Equation \ref{eq:lda}.    It is critical to clarify the distinction between these two methods: the heuristic and orthogonal solutions are indeed optimal, but for \emph{different} objectives, as discussion in Section \ref{sec:lda}.   Thus, the purpose of this cautionary example is to highlight the importance of optimizing the intended objective, and the freedom to choose that objective without a tacit connection to a generalized eigenvalue problem.  These goals can be directly and generically achieved with the optimization framework of Equation \ref{eq:1}. 

%%%%%%%%%%%%%%%%%%%%%%%%%%%%%%%%%%%%
\subsection{Performance Improvement} 
\begin{figure}
\centering
\includegraphics[width=6.0in]{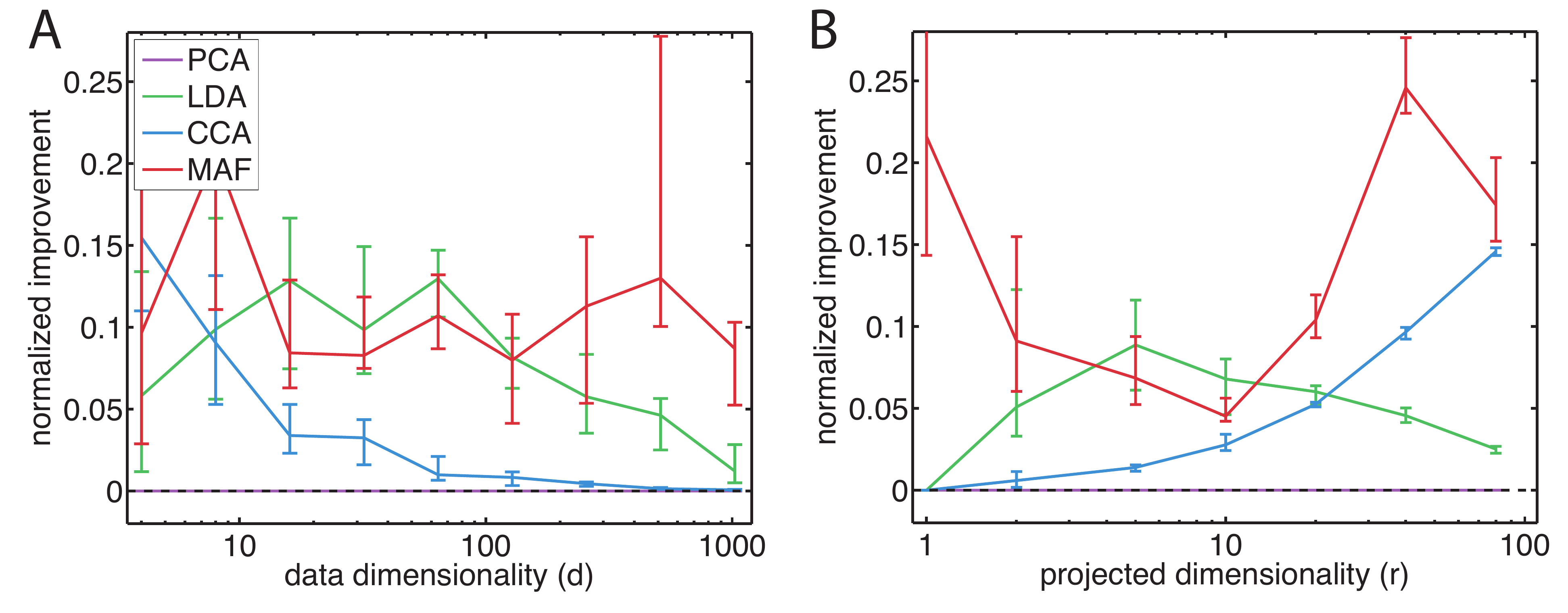}
\caption{Performance comparison between heuristic solvers and direct optimization of linear dimensionality reduction objectives.  The vertical axis denotes normalized improvement of the optimization program over traditional approaches.  %Panel A shows dimensionality reduction over a range of data dimensionalities $d \in \{4,8,16,...,1024\}$, with $r=3$ projected dimensions.  Panel B shows dimensionality reduction over a range of projected dimensionalities $r \in \{1,2,5,10,20,40,80\}$, with $d=100$ data dimensions.  
%Each method was run $20$ times independently for each choice of $(d,r)$, and 
The error bars show median performance improvement and the central 50th percentile of $20$ independent runs at each choice of $(d,r)$.}
\label{fig:perf}
\end{figure}
Here we seek to demonstrate the quantitative improvements available by directly optimizing an objective, rather than resorting to an eigenvector heuristic.  First we implemented PCA (Section \ref{sec:pca}) using both methods.  We ran PCA on $20$ random data sets for each dimensionality $d \in \{4,8,16,...,1024\}$, each time projecting onto $r=3$ dimensions.  Data were normally distributed with random covariance (exponentially distributed eccentricity with mean $2$).  We calculated $f_X\left(M^{(eig)}\right)$ and $f_X\left(M^{(orth)}\right)$ from Equation \ref{eq:pca}, and we calculated the normalized improvement of the manifold method as above in Equation \ref{eq:metric}. Since the eigenvector decomposition is provably optimal for PCA, our method should demonstrate no improvement.  Indeed, Figure \ref{fig:perf} (purple trace) shows the distribution of normalized improvements for PCA is entirely $0$ in panel A.  We then repeated this analysis for a fixed data dimensionality $d=100$ (generating data as above), now ranging the projected dimensionality $r \in \{1,2,5,10,20,40,80 \}$.  These results are shown in Figure \ref{fig:perf}B, and again, the optimization approach recovers the known PCA optima precisely.  This confirmatory result also shows, pleasingly, that there is no empirical downside (in terms of accuracy) to using manifold optimization.

We next repeated the same experiment for LDA (Section \ref{sec:lda}).   We generated data with $1000$ data points in each of $d$ classes, where within class data was generated according to a normal distribution with random covariance (uniformly distributed orientation and exponentially distributed eccentricity with mean $5$), and each class mean vector was randomly chosen (normal with standard deviation $5/d$).  
We compared the suboptimal LDA heuristic $M^{(eig)}$ (orthogonalizing the top $r$ eigenvectors of $\Sigma_W^{-1}\Sigma_B^{}$) to the direct optimization of $f_X(M) = \mathrm{tr}( M^\top \Sigma_B M ) / \mathrm{tr}(M^\top \Sigma_W M )$, which produced $M^{(orth)}$.  Unlike in PCA, Figure \ref{fig:perf} (green traces) shows that directly addressing the LDA objective produces significant performance improvements.  The green trace is plotted at the median, and the error bars show the median $50\%$ of the distribution of performance improvements across both data dimensionality $d$ (panel A) and projected dimensionality $r$ (panel B).

We next implemented Traditional CCA and Orthogonal CCA as introduced in Section \ref{sec:cca}, which yield the blue performance distributions shown in Figure \ref{fig:perf}A and B.  Data set $X_a$ was generated by a random linear transformation of a latent data set $Z$ (iid standard normal points with dimensionality of $d/2$; the random linear transformation had the same distribution), plus noise, and data set $X_b$ was generated by a different random linear transformation of the same latent $Z$, plus noise.     Again we see significant improvement of direct Orthogonal CCA over orthogonalizing Traditional CCA, when evaluated under the correlation objective of Equation \ref{eq:cca_orth}.  First, we note that to be conservative in this case we omit the denominator term from the improvement metric (Equation \ref{eq:metric}); that is, we do not normalize CCA improvements. CCA has a correlation objective, which is already a normalized quantity, and thus renormalizing would increase these improvements.  More importantly, it is essential to note that we do not claim any suboptimality of Hotelling's Traditional CCA in solving Equation \ref{eq:cca_trad}.  Rather, it is the subsequent heuristic choice of orthogonalizing the resulting mapping that is problematic.  In other words, we show that if one seeks an orthogonal projection of the data, as is often desired in practice, one should do so directly.  Our CCA results demonstrate the substantial underperformance of eigenvector heuristics in this case, and our generic solver allows a direct solution without conceptual difficulty. 

Finally, we implemented MAF as introduced in Section \ref{sec:maf}, where we generated data by a random linear transformation (uniformly distributed entries on $[0,d^{-1/2}]$) of $d$ dimensions of univariate random temporal functions, which we generated with cubic splines with four randomly located knots (uniformly distributed in the domain, standard normally distributed in range), plus noise.  MAF is another method that has been solved using an eigenvector heuristic, and the performance improvement is shown in red in Figure \ref{fig:perf}. 

In total, Figure \ref{fig:perf} offers some key points of interpretation.  First, note that no data lie in the negative halfplane (see black dashed line atop the purple line at $0$).  Though unsurprising, this is an important confirmation that the optimization program performs unambiguously better than or equal to heuristic methods.  Second, methods other than PCA produce approximately 10\% improvement using direct optimization, a significant improvement that suggests the broad use of this optimization framework.
Third, a natural question for these nonconvex programs is that of local optima.  We found that, across a wide range of choices for $d$ and $r$, nearly all methods converged to the same optimal value whether started at a random $M$ or started at the heuristic point $M^{(eig)}$.   Deeper characterization of local optima should be highly dependent on the particular objective and is beyond the scope of this work.
Third, we note that methods sometimes have performance equal to the heuristic method; indeed  $M^{(eig)}$ is sometimes a local optimum.  We found empirically that larger $r$ makes this less likely, and larger $d$ makes this more likely. 

A significant point of interpretation is that of size of average performance.  We stress that these data sets were not carefully chosen to demonstrate effect.  Indeed, we are able to adversarially choose data to create much larger performance improvements, and similarly we can choose data sets that demonstrate no effect.   Thus, one should not infer from Figure \ref{fig:perf} that, for example, Orthogonal CCA fundamentally has increasing benefit over the heuristic approach with increasing $r$ (or decreasing benefit with increasing $d$).  Instead, we encourage the takeaway of this performance figure to be that one should always optimize the objective of interest directly, rather than resorting to a reasonable but theoretically unsound eigenvector heuristic, as the performance loss is potentially meaningful.     

%%%%%%%%%%%%%%%%%%%%%%%%%%%%%%%%%%%%
\subsection{Computational Cost}
\begin{figure}[t]
\centering
\includegraphics[width=6.0in]{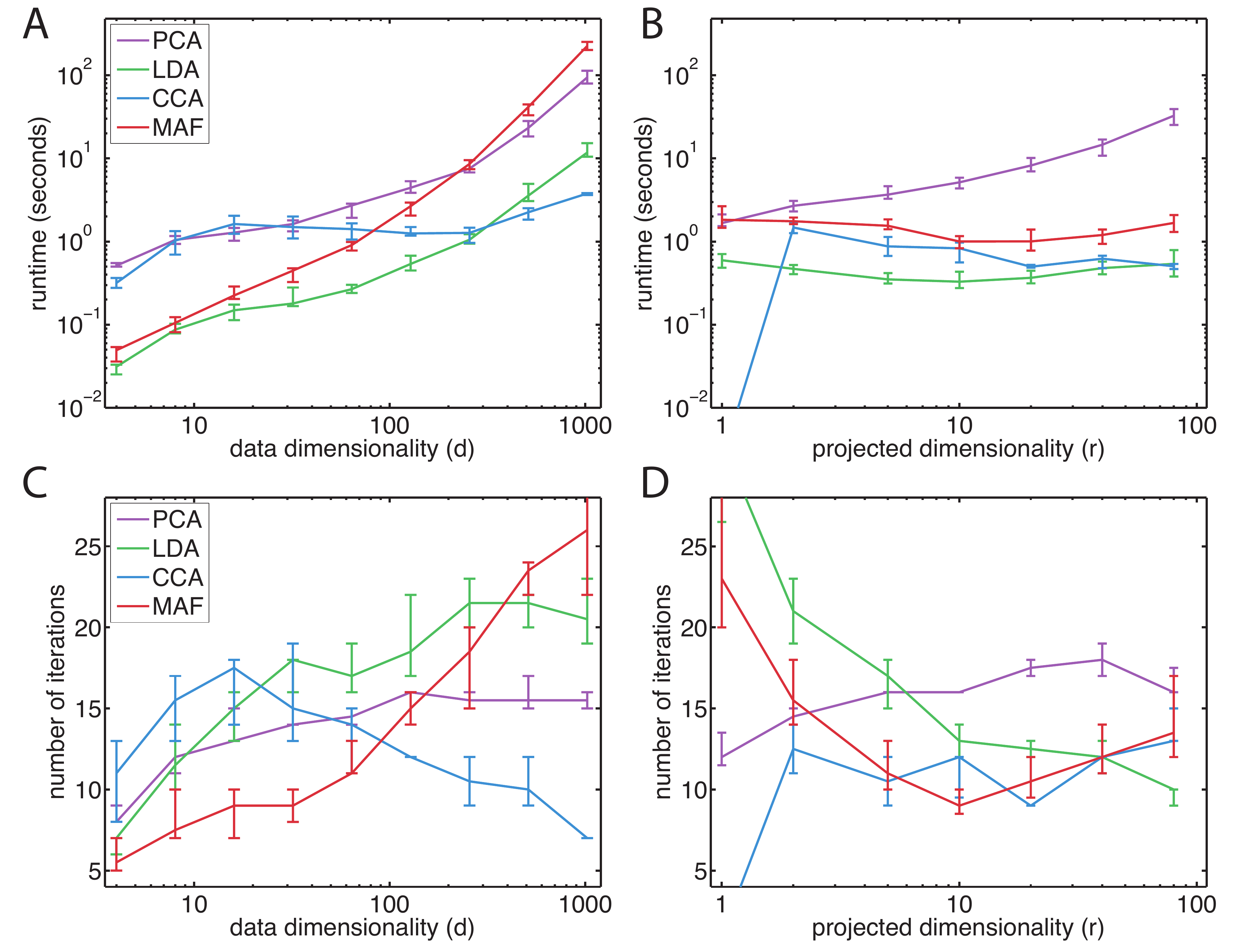}
\caption{Computational cost of direct optimization of linear dimensionality reduction objectives.  Data sets are the same as those in Figure \ref{fig:perf}.  The vertical axis in panels A and B denotes runtime in seconds.  Panels C and D show the  same data by the number of solver iterations.}
\label{fig:comp}
\end{figure}
Importantly, this matrix manifold solver does not incur massive computational cost.  The only additional computation beyond standard unconstrained first-order optimization of $dr$ variables is the projection onto or along the manifold to ensure a feasible $M \in \mathcal{O}^{d\times r}$, which in any scheme requires a matrix decomposition (see Appendix \ref{appdx:stiefel}).  Thus each algorithmic step carries an additional cost of $O(dr^2)$.   This cost is in many cases dwarfed by the larger cost of calculating matrix-matrix products with a data matrix $X \in \reals^{d \times n}$ (which often appear in the gradient calculations $\nabla_M f$).  Second order methods approximate or evaluate a Hessian, which incurs more complexity per iteration, but as usual at the tradeoff of drastically fewer iterations.   Accordingly, the runtime of manifold optimization is at worst moderately degraded compared to an unconstrained first or second order method.  Compared to eigenvector heuristics, which if implemented as a compact SVD cost only $O(dr^2)$, direct optimization is an order of magnitude or more slower due to the iterative nature of the algorithm.

Figure \ref{fig:comp} shows the computational cost of these methods, using the same data as in the previous section.  In Figure \ref{fig:comp}A, at each of $d \in \{ 4,8,16,...,1024\}$ and for $r=3$, we ran PCA, LDA, CCA, and MAF 20 times, and we show here the median and central $50\%$ of the runtime distribution (in seconds). 
This panel demonstrates that runtime increases approximately linearly as expected in $d$: runtime increases by approximately three orders of magnitude over three orders of magnitude increase in $d$.  We do a similar simulation in Figure \ref{fig:comp}B at each of $r \in \{1,2,5,10,20,40,80\}$ for a fixed $d=100$, and again runtime is increasing.  

Figures \ref{fig:comp}C and \ref{fig:comp}D show the same data as in Figures \ref{fig:comp}A and \ref{fig:comp}B, but by number of solver iterations.  In this figure we used a second-order solver over the Grassmann manifold in PCA, LDA, and MAF (critically, the same solver for all three), and the second-order solver over the product of two Stiefel manifolds in the case of CCA.  These two panels again underscore the overall point of Figure \ref{fig:comp}:  runtime complexity is not particularly burdensome across a range of reasonable choices for $d$ and $r$, even with a generic solver.

%%%%%%%%%%%%%%%%%%%%%%%%%%%%%%%%%%%%
\subsection{Choice of Solver, and Identifiability}
\begin{figure}[t]
\centering
\includegraphics[width=6.0in]{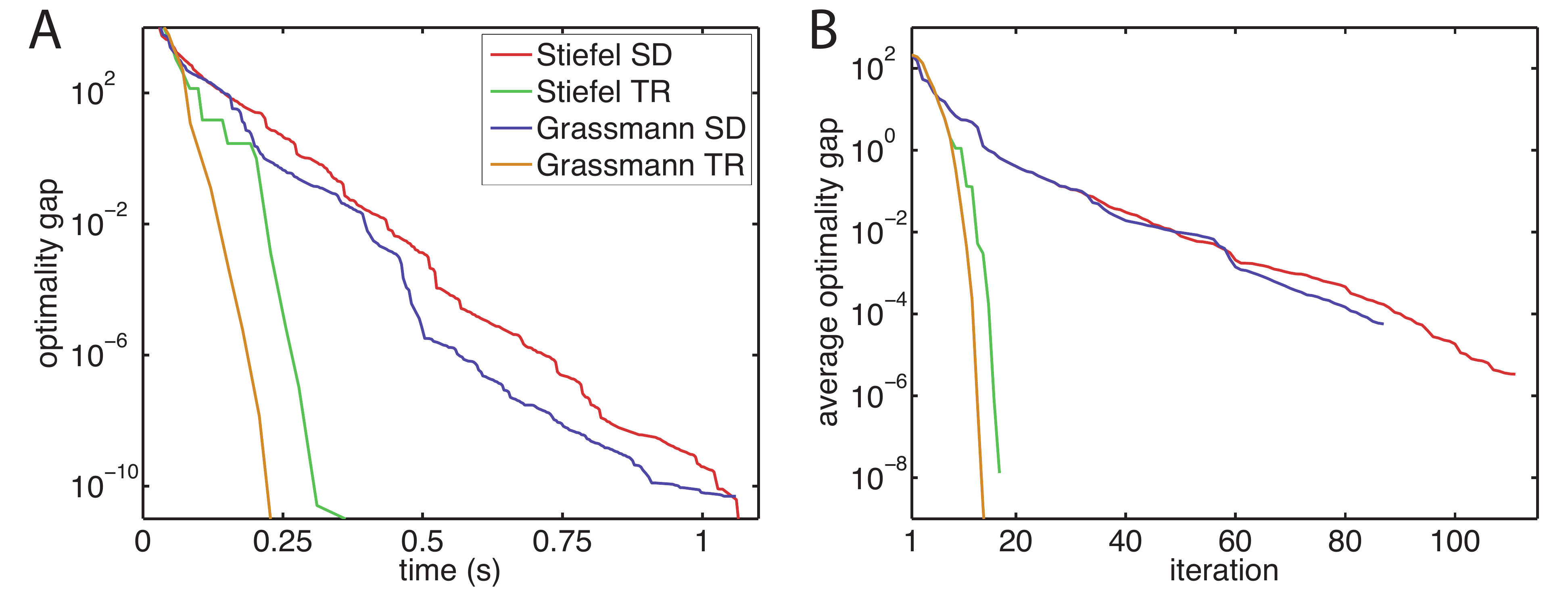}
\caption{Comparison of different optimization techniques. PCA was run on $100$ independent data sets of size $d=100$, projecting to $r=10$ dimensions.  Panel A shows the median runtime performance (across data sets), with optimality gap as a function of runtime in seconds.  Panel B shows the average optimality gap by iteration.  PCA was run on each of these data sets independently with the Stiefel steepest descent (red), Stiefel trust region (green), Grassmann steepest descent (blue), and Grassmann trust region (brown) solvers.}
\label{fig:converge}
\end{figure}
We have claimed that the choice of optimization over the Stiefel or Grassmann manifold is a question of identifiability, and further that empirically it seems to matter little to algorithmic performance.  Figure \ref{fig:converge} gives evidence to that claim.  We created $100$ independent data sets with $d = 100$ and $r=10$ for PCA.  Here the choice of algorithm is less important, and PCA is a sensible choice because we know the global optimum.  We ran PCA using four solvers: first-order steepest descent over the Stiefel manifold, first-order steepest descent over the Grassmann manifold,  second-order trust region optimization over the Stiefel manifold, and second-order trust region optimization over the Grassmann manifold.  Figure \ref{fig:converge} shows the optimality gap by solver choice for each of these four solvers.  Figure \ref{fig:converge}A shows the optimality gap as a function of time for the median performing solver (median across the $100$ independent data sets), and Figure \ref{fig:converge}B shows the optimality gap (mean across all the $100$ independent data sets) as a function of algorithmic iteration.  From these figures it is clear that second-order methods outperform first order methods, though perhaps less than one might typically expect.  More importantly, the difference between the choice of optimization over the Stiefel or Grassmann manifold is minor at best.     This figure, along with previous results, suggest the feasibility of a generic solver for orthogonal linear dimensionality reduction.  
%

%%%%%%%%%%%%%%%%%%%%%%%%%%%%%%%%%%%%%%
\section{Discussion}

Dimensionality reduction is a cornerstone of data analysis.   Among many methods, perhaps none are more often used than the linear class of methods.  By considering these methods as optimization programs of user-specified objectives over orthogonal or unconstrained matrix manifolds, we have surveyed a surprisingly fragmented literature, offered insights into the shortcomings of traditional eigenvector heuristics, and have pointed to straightforward generalizations with an objective-agnostic linear dimensionality reduction solver.  The results of Section \ref{sec:examples} suggest that linear dimensionality reduction can be abstracted away in the same way that unconstrained optimization has been, as a numerical technology that can sometimes be treated as a black-box solver.   This survey also suggests that future linear dimensionality reduction algorithms can be derived in a simpler and more principled fashion.  Of course, even with such a method one must be careful to design a linear dimensionality reduction sensibly to avoid the many unintuitive pitfalls of high-dimensional data (e.g., \citealp{diaconis1984asymptotics}).

Other authors have surveyed dimensionality reduction algorithms.  Some relevant examples include \cite{burges2010, delatorre2012, sun2009, borga1997}.  These works all focus on particular subsets of the dimensionality reduction field, and our work here is no different, insomuch as we focus exclusively on linear dimensionality reduction and the connecting concept of optimization over matrix manifolds.  \cite{burges2010} gives an excellent tutorial review of popular methods, including both linear and nonlinear methods, dividing those methods into projective and manifold approaches.  \cite{delatorre2012}  surveys five linear and nonlinear methods with their kernelized counterparts using methods from kernel regression.  \cite{borga1997} and \cite{sun2009} focus on those methods that can be cast as generalized eigenvalue problems, and derive scalable algorithms for those methods, connecting to the broad literature on optimizing Rayleigh quotients.    

The simple optimization framework discussed herein offers a direct approach to linear dimensionality reduction: many linear dimensionality reduction methods seek a meaningful, low-dimensional orthogonal subspace of the data, so it is natural to create a program that directly optimizes some objective on the data over these subspaces.  This claim is supported by the number of linear dimensionality reduction methods that fit naturally into this framework, by the ease with which new methods can be created, and by the significant performance gains achieved with direct optimization.   Thus we believe this survey offers a valuable simplifying principle for linear dimensionality reduction.

This optimization framework is conceptually most similar to the projection index from important literature in projection pursuit \cite[]{huber1985projection, friedman1987exploratory}: both that literature and the present work focus on optimizing objective functions on projections to a lower dimensional coordinate space.   Since the time of the fundamental work in projection pursuit, massive developments in computational power and advances in optimization over matrix manifolds suggest the merit of the present approach.  First, the projection pursuit literature is inherently greedy: univariate projections are optimized over the projection index, that structure is removed from the high dimensional data, and the process is repeated.  This approach leads to (potentially significant) suboptimality of the results and requires costly computation on the space of the high-dimensional data for structure removal.   The present matrix manifold framework circumvents both of these issues.  Thus, while the spirit of this framework is very much in line with the idea of a projection index, this framework, both in concept and in implementation, is critically enabled by tools that were unavailable to the original development of projection pursuit.    

%%%%%%%%%%%%%%%%%%%%%%%%%%%%%%%%%%%%%%
%%%%% END OF MAIN TEXT  %%%%%%%%%%%%%%%%%%%%%%
%%%%%%%%%%%%%%%%%%%%%%%%%%%%%%%%%%%%%%

\acks{JPC and ZG received funding from the UK Engineering and Physical Sciences Research Council (EPSRC EP/H019472/1). JPC received funding from a Sloan Research Fellowship, the Simons Foundation (SCGB\#325171 and SCGB\#325233), the Grossman Center at Columbia University, and the Gatsby Charitable Trust. }

%\newpage

%%%%%%%%%%%%%%%%%%%%%%%%%%%%%%%%%%%%%%
\appendix

\section{Optimization over the Stiefel Manifold}
\label{appdx:stiefel}

Here we offer a basic introduction to optimization over matrix manifolds, restricting our focus to a first-order, projected gradient optimization over the Stiefel manifold $\mathcal{O}^{d \times r}$.  Intuitively, manifold projected gradient methods are iterative optimization routines that require firstly an understanding of search directions along the constraint set, called the tangent space (\S \ref{sec:tan}).  With an objective $f$, gradients $\nabla_M f$ are then calculated in the full space, in this case $\reals^{d \times r}$.  These gradients are projected onto that tangent space  (\S \ref{sec:proj}).  Any nonzero step in a linear tangent space will depart from the nonlinear constraint set, so finally a \emph{retraction} is needed to map a step onto the constraint set (\S \ref{sec:retract}).  With these three components, a standard first-order iterative solver can be carried out, with typical convergence guarantees.  We conclude this tutorial appendix with pseudocode in \S \ref{sec:algo} and a figure summarizing these steps (Figure \ref{fig:stiefel}).           

We have previously introduced the Stiefel manifold $\mathcal{O}^{d\times r}$ as the set of all matrices with orthonormal columns, namely
$\mathcal{O}^{d \times r} = \left\{ M \in \reals^{d \times r} : M^\top M = I \right\},$
where $I$ is the $r \times r$ identity matrix.    $\mathcal{O}^{d \times r}$ is a manifold, an embedded submanifold of $\reals^{d \times r}$, and bounded and closed (and thus compact).  From these facts we can carry over all intuitions of an explicit (though nonlinear and nonconvex) constraint set within $\reals^{d \times r}$.  

%%%%%%%%%%%%%%%%%%%%%%%%%%%%%%%%%%%%%%
\subsection{Tangent Space $T_M \mathcal{O}^{d \times r}$}
\label{sec:tan}
Critical to understanding the geometry of any manifold (in particular to exploit that geometry for optimization) is the \emph{tangent space}, the linear (vector space) approximation to the manifold at a particular point.  To define this space, we first define a \emph{curve} on the manifold $\mathcal{O}^{d\times r}$ as a smooth map $\gamma(\cdot): \reals \rightarrow \mathcal{O}^{d\times r}$.  Then, the tangent space is
\begin{equation}
\label{eq:TM}
T_M \mathcal{O}^{d \times r} = \left\{ \dot{\gamma}(0) : \gamma(\cdot) \text{~is~a~curve~on~} \mathcal{O}^{d\times r} \text{ with }  \gamma(0) = M \right\},
\end{equation}
\noindent where $\dot{\gamma}$ is the derivative $\frac{d}{dt}\gamma(t)$.  Loosely, $T_M \mathcal{O}^{d \times r}$ is the space of directions along the manifold at a point $M$.  While Equation \ref{eq:TM} is fairly general for embedded submanifolds, it is abstract and leaves little insight into numerical implementation.  Conveniently, the tangent space of the Stiefel manifold has a particularly nice equivalent form.   
\begin{clm}[Tangent space of the Stiefel Manifold]
\label{clm:tan}
The following sets are equivalent:
\begin{eqnarray}
\label{eq:t0}
T_M \mathcal{O}^{d \times r} &= &\left\{ \dot{\gamma}(0) : \gamma(\cdot) \mathrm{~is~a~curve~on~}  \mathcal{O}^{d\times r} \mathrm{~with~}  \gamma(0) = M \right\}, \\
\label{eq:t1}
T_1 & = & \left\{ X \in \reals^{d \times r} : M^\top X + X^\top M = 0 \right\}, \\
\label{eq:t2}
T_2 & = & \left\{ MA + (I-MM^\top)B : A = -A^\top , B \in \reals^{d \times r}  \right\}.
\end{eqnarray}
\end{clm}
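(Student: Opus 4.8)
The plan is to establish the chain of set equalities $T_M\mathcal{O}^{d\times r} = T_1 = T_2$ by proving $T_M\mathcal{O}^{d\times r} \subseteq T_1$, then $T_1 \subseteq T_2$ (or $T_1 = T_2$ directly), and finally $T_2 \subseteq T_M\mathcal{O}^{d\times r}$, closing the loop. The bulk of the work is routine linear algebra, so the real content is just organizing the three inclusions cleanly and handling a dimension-count at the end.

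\textbf{Step 1: $T_M\mathcal{O}^{d\times r} \subseteq T_1$.} Take any tangent vector, i.e.\ any $\dot\gamma(0)$ where $\gamma$ is a smooth curve on $\mathcal{O}^{d\times r}$ with $\gamma(0)=M$. The defining constraint gives $\gamma(t)^\top \gamma(t) = I$ for all $t$; differentiating at $t=0$ by the product rule yields $\dot\gamma(0)^\top M + M^\top \dot\gamma(0) = 0$, which is exactly the condition defining $T_1$. So every element of $T_M\mathcal{O}^{d\times r}$ lies in $T_1$.

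\textbf{Step 2: $T_1 = T_2$.} For $T_2 \subseteq T_1$: given $X = MA + (I-MM^\top)B$ with $A=-A^\top$, compute $M^\top X = A$ (since $M^\top M = I$ kills the second term), so $M^\top X + X^\top M = A + A^\top = 0$; hence $X\in T_1$. For $T_1\subseteq T_2$: given $X\in T_1$, write the orthogonal decomposition $X = MM^\top X + (I-MM^\top)X$, set $A := M^\top X$ and $B := X$. The constraint $M^\top X + X^\top M = 0$ says precisely $A + A^\top = 0$, so $A$ is skew-symmetric and $X = MA + (I-MM^\top)B$ has the required form. This gives $T_1 = T_2$ with essentially no computation.

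\textbf{Step 3: $T_2 \subseteq T_M\mathcal{O}^{d\times r}$, closing the loop.} The cleanest route is a dimension argument: Steps 1 and 2 already give $T_M\mathcal{O}^{d\times r}\subseteq T_1 = T_2$, so it suffices to show $\dim T_M\mathcal{O}^{d\times r} \geq \dim T_2$, which forces equality of all three. One can compute $\dim T_2$ directly: the skew-symmetric $r\times r$ matrices $A$ contribute $\binom{r}{2}$ dimensions, and $(I-MM^\top)B$ ranges over the image of the rank-$(d-r)$ projector applied to $\reals^{d\times r}$, contributing $(d-r)r$ dimensions, for a total of $\tfrac{1}{2}r(r-1) + (d-r)r$; this is the known dimension of the Stiefel manifold, which equals $\dim T_M\mathcal{O}^{d\times r}$ since the tangent space of an embedded submanifold has the same dimension as the manifold. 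Alternatively, and perhaps more self-contained, one constructs an explicit curve for each $X\in T_2$: for $X = MA + (I-MM^\top)B$ with $A$ skew, the curve $\gamma(t) = (M + t(I-MM^\top)B)\,\exp(tA)$ (or a QR-retraction variant) can be checked to stay on $\mathcal{O}^{d\times r}$ to first order with $\dot\gamma(0) = X$; care is needed because $M\exp(tA)$ alone only generates the $MA$ part, so the normal component must be incorporated, and one should verify $\gamma(t)^\top\gamma(t) = I + O(t^2)$ suffices since a genuine smooth curve on the manifold can then be obtained by retracting.

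\textbf{Main obstacle.} The only subtle point is Step 3: the explicit-curve approach requires a curve that is \emph{exactly} on the manifold (not just to first order) with the prescribed velocity, and naively adding the normal component $(I-MM^\top)B$ breaks orthonormality at second order. The dimension-counting argument sidesteps this entirely and is what I would actually use; it relies only on the standard fact (which the paper treats as background) that $\mathcal{O}^{d\times r}$ is an embedded submanifold, so that its tangent space at $M$ is a linear subspace of $\reals^{d\times r}$ of dimension equal to $\dim\mathcal{O}^{d\times r}$. Given that, the inclusions from Steps 1--2 plus the dimension match close the argument immediately.
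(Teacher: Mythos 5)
Your proposal is correct, but it takes a genuinely different route from the paper on the one step that carries real content. Both arguments handle $T_M\mathcal{O}^{d\times r}\subseteq T_1$ identically (differentiate $\gamma(t)^\top\gamma(t)=I$), and your $T_1=T_2$ step via $A:=M^\top X$, $B:=X$ is if anything cleaner than the paper's, which proves $T_1\subseteq T_2$ by a slightly awkward contrapositive. The divergence is in showing $T_1\subseteq T_M\mathcal{O}^{d\times r}$: the paper exhibits, for each $X\in T_1$, the explicit curve $\gamma(t)=(M+tX)(I+t^2X^\top X)^{-1/2}$, checks it lies \emph{exactly} on $\mathcal{O}^{d\times r}$ (the cross terms $tM^\top X+tX^\top M$ vanish precisely because $X\in T_1$), and verifies $\dot\gamma(0)=X$ by showing the derivative of the normalizing factor vanishes at $t=0$. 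This is exactly the normalization trick that resolves the second-order obstruction you correctly identified in your ``explicit curve'' alternative, and the paper reuses this same curve as its projective retraction $r_M(Z)=(M+Z)(I+Z^\top Z)^{-1/2}$ later in the appendix, so the construction pays double. Your dimension-counting argument is valid and shorter, but it imports two background facts: that $\mathcal{O}^{d\times r}$ is an embedded submanifold of dimension $dr-\tfrac{1}{2}r(r+1)$, and that the curve-velocity set is a linear subspace of that dimension. The standard proof of those facts (the regular value theorem applied to $M\mapsto M^\top M$) already identifies the tangent space as $\ker DF(M)=T_1$, so the dimension count is in some sense borrowing a conclusion whose usual derivation subsumes the claim; the paper's explicit curve keeps the appendix self-contained and constructive, which matters for a tutorial whose point is numerical implementation.
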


\begin{proof}
The proof proceeds in four steps:
\begin{enumerate}
\item $X \in T_M \mathcal{O}^{d \times r}  \Rightarrow X \in T_1$

Considering a curve $\gamma(t)$ from Equation \ref{eq:t0}, we know $\gamma(t)^\top \gamma(t) = I$ (every point of the curve is on the manifold).  We differentiate in $t$ to see $\gamma(t)^\top \dot{\gamma}(t) + \dot{\gamma}(t)^\top \gamma(t) = 0$.  At $t=0$, we have $\gamma(0)=M$, and we define the tangent space element $\dot{\gamma}(0) = X$.  Then $X$ is such that $M^\top X + X^\top M = 0$.  

\item $X \in T_1 \Rightarrow X \in T_M \mathcal{O}^{d \times r}$

We must construct a curve such that any $X \in T_1$ is a point in the tangent space; consider  $\gamma(t) = (M + tX)( I + t^2X^\top X )^{-1/2}$ (a choice that we will see again below in \S\ref{sec:retract}).  First, this curve satisfies $\gamma(0) = M$.  Second, $\gamma(\cdot)$ is a curve on the Stiefel manifold, since every point $\gamma(t)$ satisfies
\begin{eqnarray*}
\gamma(t)^\top\gamma(t) & = & ( I + t^2X^\top X )^{-1/2} (M + tX)^\top (M + tX)( I + t^2X^\top X )^{-1/2} \\
& = & ( I + t^2X^\top X )^{-1/2} (M^\top M + tM^\top X + tX^\top M + t^2 X^\top X ) ( I + t^2X^\top X )^{-1/2} \\
%& = & ( I + t^2X^\top X )^{-1/2} (I + tM^\top X + tX^\top M + t^2 X^\top X ) ( I + t^2X^\top X )^{-1/2} \\
& = & ( I + t^2X^\top X )^{-1/2} (I +  t^2 X^\top X ) ( I + t^2X^\top X )^{-1/2} \\
& = & I,
\end{eqnarray*}
where the third line uses $M\in \mathcal{O}^{d\times r}$ and $X \in T_1$.  It remains to show only that $\dot{\gamma}(0) = X$.  We differentiate $\gamma(t)$ as 
\begin{equation}
\label{eq:dgam}
\dot{\gamma}(t) = X(I + t^2X^\top X)^{-1/2} + (M + tX) \frac{d}{dt}(I + t^2 X^\top X)^{-1/2}.
\end{equation}
The rightmost derivative term of Equation \ref{eq:dgam} does not have a closed form, but is the unique solution to a Sylvester equation.  Letting $\alpha(t) = (I + t^2 X^\top X)^{-1/2}$, we seek $\dot{\alpha}(0)$.  By implicit differentiation, 
\begin{eqnarray*}
\left[ \frac{d}{dt}\alpha(t)\alpha(t)\right]_{t=0} & = & \left[ \frac{d}{dt}(I + t^2 X^\top X)^{-1} \right]_{t=0}\\
\dot{\alpha}(0)\alpha(0) + \alpha(0)\dot{\alpha}(0) & = & \left[ (I + t^2 X^\top X)^{-1}\left( 2tX^\top X \right) (I + t^2 X^\top X)^{-1}  \right]_{t=0}\\
2\dot{\alpha}(0) & = & 0,\\
\end{eqnarray*}
since $\alpha(0) = I$.  Thus we see $\dot{\alpha}(0) = \left[\frac{d}{dt}(I + t^2 X^\top X)^{-1/2}\right]_{t=0} = 0$.  Equation \ref{eq:dgam} yields $\dot{\gamma}(0) = X$, which completes the proof of the converse.

\item $X \in T_2 \Rightarrow X \in T_1$

Let $X = MA + (I-MM^\top)B$ according to Equation \ref{eq:t2}.  Then
\begin{eqnarray*}
M^\top X + X^\top M & = & M^\top MA + M^\top (I-MM^\top)B + A^\top M^\top M + B^\top(I-MM^\top) M \\
& = & A + A^\top \\
& = & 0,
\end{eqnarray*}
by the skew-symmetry of $A$ and $M \in \mathcal{O}^{d \times r}$.  

\item $X \in T_1 \Rightarrow X \in T_2$

We show the transposition $X \not\in T_2 \Rightarrow X \not\in T_1$.  By the definition of $T_2$, $X = MA + (I-MM^\top)B$ is not in $T_2$ if and only if $A \neq -A^\top$.  Then, using the previous argument, we see that such an $X$ has $M^\top X + X^\top M \neq 0$, and thus is not a member of $T_1$.  
\end{enumerate}
Thus, the three tangent space definitions Equations \ref{eq:t0}-\ref{eq:t2} are equivalent.  The definition of Equation \ref{eq:t2} is particularly useful as it is constructive, which is essential when considering optimization.
\end{proof}
%
%%%%%%%%%%%%%%%%%%%%%%%%%%%%%%%%%%%%%%%
\subsection{Projection $\pi_M : \reals^{d \times r} \rightarrow T_M \mathcal{O}^{d \times r}$}
\label{sec:proj}
Because $\mathcal{O}^{d \times r}$ is an embedded submanifold of $\reals^{d \times r}$, it is natural to consider the metric implied by Euclidean space : $\reals^{d \times r}$ endowed with the standard inner product $\langle P , N \rangle = \mathrm{tr}(P^\top N)$, and the induced Frobenius norm $||\cdot ||_F$.  With this metric, the Stiefel manifold is then a Riemannian submanifold of Euclidean space. This immediately allows us to consider the projection of an arbitrary vector $Z\in \reals^{d \times r}$ onto the tangent space $T_M \mathcal{O}^{d \times r}$, namely
\begin{eqnarray*}
\pi(Z) & = & \underset{X \in T_M \mathcal{O}^{d \times r}}{\argmin} || Z - X ||_F \\
& = &  {\argmin}~ || Z - (MA - (I-MM^\top)B )||_F \\
& = & {\argmin}~ || (MM^\top Z -  MA) + (I-MM^\top)(Z - B )||_F \\
& = & {\argmin}~ || M(M^\top Z -  A)||_F + ||(I-MM^\top)(Z - B )||_F \\
& = & {\argmin}~ || M^\top Z -  A||_F + ||(I-MM^\top)(Z - B )||_F,
%& = & M \text{sk}(M^\top Z) + (I-MM^\top)Z, 
\end{eqnarray*}
where the last equality comes from the unitary invariance of the Frobenius norm.  This expression is minimized by setting $B=Z$ and setting $A$ to be the skew-symmetric part of $M^\top Z$, namely $A :=  \text{skew}( M^\top Z ) = \frac{1}{2}( M^\top Z - Z^\top M ) $ \cite[]{fan1955}.  This results in the projection
\begin{equation}
\label{eq:pi}
\pi_M(Z) = M \text{skew}(M^\top Z) + (I-MM^\top)Z.
\end{equation}

We note that an alternative canonical metric is often considered in this literature, namely  $\langle P , N \rangle_M = \mathrm{tr}\left(P^\top(I-MM^\top) N\right)$.  The literature is divided on this choice; for simplicity we choose the standard inner product.
%
%%%%%%%%%%%%%%%%%%%%%%%%%%%%%%%%%%%%%%%
\subsection{Retraction $r_M :  T_M \mathcal{O}^{d \times r} \rightarrow \mathcal{O}^{d \times r}$}
\label{sec:retract}
Projected gradient methods seek an iterative step in the direction of steepest descent along the manifold, namely $M + \beta \pi_M\left( - \nabla_M f \right)$.  For any nonzero step size $\beta$, this iterate will leave the Stiefel manifold.  Thus, a \emph{retraction} is required to map onto the manifold.  A number of projective retractions are available \cite[]{kaneko2013empirical}; here we define the retraction of a step $Z$ away from a current manifold point $M$ as
\begin{equation}
\label{eq:r}
r_M(Z) = \argmin_{N \in \mathcal{O}^{d \times r}} || N - (M +Z) ||_F,
\end{equation}
that is, the closest point on the manifold to the desired iterate $M+Z$.  For unitarily invariant norms, a classic result is that $r_M(Z) = UV^\top$, where $(M+Z) = USV^\top$ is the singular value decomposition \cite[]{fan1955}, or equivalently, $r_M(Z) = W$ for a polar decomposition  $(M+Z) = WP$.  Conveniently, when $Z \in T_M \mathcal{O}^{d \times r}$, this retraction has the simple closed form $r_M(Z) = (M + Z)(I + Z^\top Z)^{-1/2}$ \cite[]{kaneko2013empirical}, which explains the choice of curve in \S\ref{sec:tan}.

In the cases of the Stiefel and Grassmann manifolds, it is possible to directly calculate a manifold geodesic (shortest path between two points in the manifold).   While more aesthetically pleasing, calculating such a geodesic requires a matrix exponential, and thus has similar computational burden as a projective retraction (often the exponential is slightly more expensive).  Empirically, we have found very little difference in the convergence or computational burden of this choice, and thus we focus this tutorial on the conceptually simpler retraction.  \cite{absil2012projection} discuss projective retractions compared with geodesics/exponential maps.
\begin{figure}[t]
\centering
\includegraphics[width=2.8in]{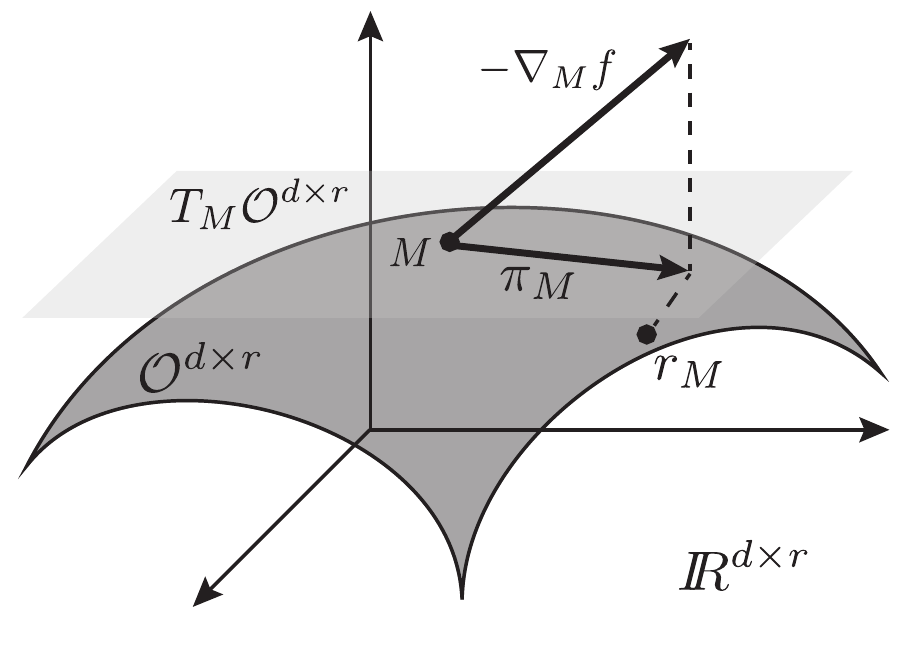}
\caption{Cartoon of a projected gradient step on the Stiefel manifold. Notation follows Algorithm \ref{alg:opt}.}
\label{fig:stiefel}
\end{figure}
%

%%%%%%%%%%%%%%%%%%%%%%%%%%%%%%%%%%%%%%%
\subsection{Pseudocode for a Projected Gradient Solver}
\label{sec:algo}
Algorithm \ref{alg:opt} gives pseudocode for a projected gradient method over the Stiefel manifold.  This generic algorithm requires only a choice of convergence parameters and a line search method, choices which are standard for first-order optimization. Chapter 4 of \cite{absilBook} offers a global convergence proof for such a method using Armijo line search.  Indeed, the only particular consideration for this algorithmic implementation is the tangent space $T_M \mathcal{O}^{d \times r}$, the projection $\pi_M$, and the retraction $r_M$.  
%
% pseudocode here
\begin{algorithm}
\caption{Gradient descent over the Stiefel manifold (with line search and retraction)}
\label{alg:opt}
\begin{algorithmic}[1]
\STATE initialize $M \in \mathcal{O}^{d \times r}$ 
\WHILE{$f(M)$ has not converged}
\STATE calculate $\nabla_M f \in \reals^{d \times r}$ \hfill \# free gradient of objective
\STATE calculate $\pi_M( -\nabla_M f) \in T_M \mathcal{O}^{d \times r}$ \hfill \# search direction (Equation \ref{eq:pi})  
%\STATE choose $\beta$ such that 
\WHILE{$f( r_M(\beta \pi_M( -\nabla_M f) ) )$ is not sufficiently smaller than $f(M)$}  
\STATE adjust step size $\beta$ \hfill \# line search (using retraction, Equation \ref{eq:r})
\ENDWHILE
%\STATE $M_{prev} \leftarrow M$
\STATE $M \leftarrow r_M( \beta \pi_M( -\nabla_M f) )$ \hfill \# iterate
\ENDWHILE
\STATE \textbf{return} (local) minima $M^*$ of $f$.
\end{algorithmic}
\end{algorithm}

It is worth noting that the above operations imply a two-stage gradient step: Algorithm \ref{alg:opt} first projects the free gradient onto the tangent space ($\pi_M$), and second the proposed step is retracted onto the manifold ($r_M$).   It is natural to ask why one does not perform this projection in one step, for example by projecting the free gradient directly onto the manifold.  Firstly, while there is a rich literature of such `one-step' projected gradient methods \cite[]{bertsekas1976goldstein}, convergence guarantees only exist for convex constraint sets.  Indeed, all matrix manifolds we have discussed are nonconvex (except the trivial $\reals^{d\times r}$).  The theory of convergence for nonconvex manifolds requires this two-step procedure.  Secondly, in our empirical experience, while a one-step projection method does often converge, that convergence is typically much slower than Algorithm 1.
%Secondly, in our empirical experience, while a one-step projection method does often converge, it typically was deeply suboptimal in performance to Algorithm \ref{alg:opt}. 
% see http://mathoverflow.net/questions/59705/projected-gradient-descent-for-non-convex-optimization-problems 

%
This basic algorithm is extended in two ways: first, the constraint manifold $\mathcal{M}$ is taken to be the Grassmann manifold or some other manifold structure (like the product of Stiefel manifolds, as in CCA above); and second,  conjugate gradient methods or second-order optimization techniques can be similarly adapted to the setting of matrix manifolds.  Beyond these steps, understanding optimization over matrix manifolds in full generality requires topological and differential geometric machinery that is beyond the scope of this work.  All of these topics are discussed in the key reference to this appendix \cite[]{absilBook}, as well as the literature cited throughout this paper.

%%%%%%%%%%%%%%%%%%%%%%%%%%%%%%%%%%%%%%

%\newpage

\bibliography{ldr}

\begin{thebibliography}{133}
\providecommand{\natexlab}[1]{#1}
\providecommand{\url}[1]{\texttt{#1}}
\expandafter\ifx\csname urlstyle\endcsname\relax
  \providecommand{\doi}[1]{doi: #1}\else
  \providecommand{\doi}{doi: \begingroup \urlstyle{rm}\Url}\fi

\bibitem[Abrudan et~al.(2008)Abrudan, Eriksson, and Koivunen]{abrudan2008}
T.~E. Abrudan, J.~Eriksson, and V.~Koivunen.
\newblock Steepest descent algorithms for optimization under unitary matrix
  constraint.
\newblock \emph{IEEE Transactions on Signal Processing}, 56:\penalty0
  1134--1147, 2008.

\bibitem[Absil and Malick(2012)]{absil2012projection}
P.~Absil and J.~Malick.
\newblock Projection-like retractions on matrix manifolds.
\newblock \emph{SIAM Journal on Optimization}, 22\penalty0 (1):\penalty0
  135--158, 2012.

\bibitem[Absil et~al.(2008)Absil, Mahony, and Sepulchre]{absilBook}
P.~A. Absil, R.~Mahony, and R.~Sepulchre.
\newblock \emph{Optimization Algorithms on Matrix Manifolds}.
\newblock Princeton University Press, Princeton, 2008.

\bibitem[Adragni and Cook(2009)]{adragni2009}
K.~P. Adragni and D.~Cook.
\newblock Sufficient dimension reduction and prediction in regression.
\newblock \emph{Philosophical Transactions of the Royal Society A},
  367:\penalty0 4385--4405, 2009.

\bibitem[Amari(1999)]{amari1999}
S.~Amari.
\newblock Natural gradient learning for over and under complete bases in {ICA}.
\newblock \emph{Neural Computation}, 11:\penalty0 1875--1883, 1999.

\bibitem[Baccini et~al.(1996)Baccini, Besse, and de~Faguerolles]{baccini1996}
A.~Baccini, P.~Besse, and A.~de~Faguerolles.
\newblock A {L}1-norm {PCA} and heuristic approach.
\newblock In \emph{Proceedings of the International Conference on Ordinal and
  Symbolic Data Analysis}, pages 359--368, 1996.

\bibitem[Bach et~al.(2011)Bach, Jenatton, Mairal, and
  Obozinski]{bach2011convex}
F.~Bach, R.~Jenatton, J.~Mairal, and G.~Obozinski.
\newblock Convex optimization with sparsity-inducing norms.
\newblock \emph{Optimization for Machine Learning}, pages 19--53, 2011.

\bibitem[Bak{\i}r et~al.(2004)Bak{\i}r, Gretton, Franz, and
  Sch{\"o}lkopf]{bakir2004multivariate}
G.~H. Bak{\i}r, A.~Gretton, M.~Franz, and B.~Sch{\"o}lkopf.
\newblock Multivariate regression via {S}tiefel manifold constraints.
\newblock In \emph{Pattern Recognition}, pages 262--269. Springer, 2004.

\bibitem[Bar-Hillel et~al.(2003)Bar-Hillel, Hertz, Shental, and
  Weinshall]{bar2003learning}
A.~Bar-Hillel, T.~Hertz, N.~Shental, and D.~Weinshall.
\newblock Learning distance functions using equivalence relations.
\newblock In \emph{Proceedings of the International Conference on Machine
  Learning}, volume~3, pages 11--18, 2003.

\bibitem[Belkin and Niyogi(2003)]{belkin2003}
M.~Belkin and P.~Niyogi.
\newblock Laplacian eigenmaps for dimensionality reduction and data
  representation.
\newblock \emph{Neural Computation}, 15\penalty0 (6):\penalty0 1373--1396,
  2003.

\bibitem[Bertsekas(1976)]{bertsekas1976goldstein}
D.~P. Bertsekas.
\newblock On the goldstein-levitin-polyak gradient projection method.
\newblock \emph{IEEE Transactions on Automatic Control}, 21\penalty0
  (2):\penalty0 174--184, 1976.

\bibitem[Bishop(2006)]{bishopBook}
C.~M. Bishop.
\newblock \emph{Pattern Recognition and Machine Learning}.
\newblock Springer, New York, 2006.

\bibitem[Borg and Groenen(2005)]{borgBook}
I.~Borg and P.~J. Groenen.
\newblock \emph{Modern Multidimensional Scaling: Theory and Applications}.
\newblock Springer Verlag, 2005.

\bibitem[Borga et~al.(1997)Borga, Landelius, and Knutsson]{borga1997}
M.~Borga, T.~Landelius, and H.~Knutsson.
\newblock A unified approach to {PCA}, {PLS}, {MLR}, and {CCA}.
\newblock \emph{Technical Report}, 1997.

\bibitem[Boumal et~al.(2014)Boumal, Mishra, Absil, and
  Sepulchre]{boumal2014manopt}
N.~Boumal, B.~Mishra, P.~Absil, and R.~Sepulchre.
\newblock Manopt, a matlab toolbox for optimization on manifolds.
\newblock \emph{Journal of Machine Learning Research}, 15:\penalty0 1455--1459,
  2014.

\bibitem[Boyd et~al.(2011)Boyd, Parikh, Chu, Peleato, and
  Eckstein]{boyd2011distributed}
S.~Boyd, N.~Parikh, E.~Chu, B.~Peleato, and J.~Eckstein.
\newblock Distributed optimization and statistical learning via the alternating
  direction method of multipliers.
\newblock \emph{Foundations \& Trends in Machine Learning}, 3\penalty0
  (1):\penalty0 1--122, 2011.

\bibitem[Bray and Martinez(2002)]{bray2002}
A.~Bray and D.~Martinez.
\newblock Kernel-based extraction of slow features: Complex cells learn
  disparity and translation invariance from natural images.
\newblock In \emph{Advances in Neural Information Processing Systems}, pages
  253--260, 2002.

\bibitem[Brendel et~al.(2011)Brendel, Romo, and Machens]{brendel2011demixed}
W.~Brendel, R.~Romo, and C.~K. Machens.
\newblock Demixed principal component analysis.
\newblock In \emph{Advances in Neural Information Processing Systems}, pages
  2654--2662, 2011.

\bibitem[Buntine(2002)]{buntine02}
W.~Buntine.
\newblock Variational extensions to {EM} and multinomial {PCA}.
\newblock In \emph{Proceedings of the European Conference on Machine Learning},
  2002.

\bibitem[Burges(2010)]{burges2010}
C.~J.~C. Burges.
\newblock Dimension reduction: a guided tour.
\newblock \emph{Foundations \& Trends in Machine Learning}, 2\penalty0
  (4):\penalty0 275--365, 2010.

\bibitem[Candes et~al.(2011)Candes, Li, Ma, and Wright]{candes2011}
E.~J. Candes, X.~Li, Y.~Ma, and J.~Wright.
\newblock Robust principal component analysis?
\newblock \emph{Journal of the ACM}, 58\penalty0 (3):\penalty0 11:1--11:37,
  2011.

\bibitem[Chechik et~al.(2009)Chechik, Shalit, Sharma, and
  Bengio]{chechik2009online}
G.~Chechik, U.~Shalit, V.~Sharma, and S.~Bengio.
\newblock An online algorithm for large scale image similarity learning.
\newblock In \emph{Advances in Neural Information Processing Systems}, pages
  306--314, 2009.

\bibitem[Choulakian(2006)]{choulakian2006}
V.~Choulakian.
\newblock L1-norm projection pursuit principal component analysis.
\newblock \emph{Computational Statistics and Data Analysis}, 50\penalty0
  (6):\penalty0 1441--1451, 2006.

\bibitem[Churchland et~al.(2012)Churchland, Cunningham, Kaufman, Foster,
  Nuyujukian, Ryu, and Shenoy]{churchland2012}
M.~M. Churchland, J.~P. Cunningham, M.~T. Kaufman, J.~D. Foster, P.~Nuyujukian,
  S.~I. Ryu, and K.~V. Shenoy.
\newblock Neural population dynamics during reaching.
\newblock \emph{Nature}, 487:\penalty0 51--56, 2012.

\bibitem[Coifman and Lafon(2006)]{coifman2006}
R.~R. Coifman and S.~Lafon.
\newblock Diffusion maps.
\newblock \emph{Applied and Computational Harmonic Analysis}, 21\penalty0
  (1):\penalty0 5--30, 2006.

\bibitem[Collins et~al.(2002)Collins, Dasgupta, and Schapire]{collins2002}
M.~Collins, S.~Dasgupta, and R.~E. Schapire.
\newblock A generalization of principal component analysis to the exponential
  family.
\newblock In \emph{Advances in Neural Information Processing Systems}, 2002.

\bibitem[Cox and Cox(2001)]{coxBook}
T.~F. Cox and M.~A. Cox.
\newblock \emph{Multidimensional scaling}, volume~88.
\newblock CRC Press, 2001.

\bibitem[Cunningham and Yu(2014)]{cunningham2014dimensionality}
J.~P. Cunningham and B.~M. Yu.
\newblock Dimensionality reduction for large-scale neural recordings.
\newblock \emph{Nature Neuroscience}, 17:\penalty0 1500--1509, 2014.

\bibitem[d'{A}spremont et~al.(2007)d'{A}spremont, Ghaoui, Jordan, and
  Lanckriet]{daspremont2007}
A~d'{A}spremont, L.~El Ghaoui, M.~I. Jordan, and G.~R. Lanckriet.
\newblock A direct formulation for sparse {PCA} using semidefinite programming.
\newblock \emph{{SIAM} Review}, 49:\penalty0 434--448, 2007.

\bibitem[d'{A}spremont et~al.(2008)d'{A}spremont, Bach, and
  Ghaoui]{daspremont2008}
A~d'{A}spremont, F.~R. Bach, and L.~El Ghaoui.
\newblock Optimal solutions for sparse principal component analysis.
\newblock \emph{Journal of Machine Learning Research}, 9:\penalty0 1269--1294,
  2008.

\bibitem[De~la Torre(2012)]{delatorre2012}
F.~De~la Torre.
\newblock A least-squares framework for component analysis.
\newblock \emph{IEEE Transactions on Pattern Analysis and Machine
  Intelligence}, 34\penalty0 (6):\penalty0 1041--1055, 2012.

\bibitem[De~Ridder et~al.(2002)De~Ridder, Duin, and Kittler]{de2002texture}
D.~De~Ridder, R.~P.~W. Duin, and J.~Kittler.
\newblock Texture description by independent components.
\newblock In \emph{Structural, Syntactic, and Statistical Pattern Recognition},
  pages 587--596. Springer, 2002.

\bibitem[Deerwester et~al.(1990)Deerwester, Dumais, Landauer, Furnas, and
  Harshman]{deerwester1990}
S.~C. Deerwester, S.~T. Dumais, T.~K. Landauer, G.~W. Furnas, and R.~A.
  Harshman.
\newblock Indexing by latent semantic analysis.
\newblock \emph{Journal of the American Society for Information Science},
  41\penalty0 (6):\penalty0 391--407, 1990.

\bibitem[Dempster et~al.(1977)Dempster, Laird, and Rubin]{dempster77}
A.~P. Dempster, N.~M. Laird, and D.~B. Rubin.
\newblock Maximum likelihood from incomplete data via the {EM} algorithm (with
  discussion).
\newblock \emph{Journal of the Royal Statistical Society, Series B},
  39:\penalty0 1--38, 1977.

\bibitem[Der and Saul(2012)]{der2012latent}
M.~Der and L.~K. Saul.
\newblock Latent coincidence analysis: a hidden variable model for distance
  metric learning.
\newblock In \emph{Advances in Neural Information Processing Systems}, pages
  3230--3238, 2012.

\bibitem[Diaconis and Freedman(1984)]{diaconis1984asymptotics}
P.~Diaconis and D.~Freedman.
\newblock Asymptotics of graphical projection pursuit.
\newblock \emph{The Annals of Statistics}, pages 793--815, 1984.

\bibitem[Eckart and Young(1936)]{eckart1936}
C.~Eckart and G.~Young.
\newblock The approximation of one matrix by another of lower rank.
\newblock \emph{Psychometrika}, 1:\penalty0 211--218, 1936.

\bibitem[Edelman et~al.(1998)Edelman, Arias, and Smith]{edelman1998}
A.~Edelman, T.~A. Arias, and S.~T. Smith.
\newblock The geometry of algorithms with orthogonality constraints.
\newblock \emph{SIAM Journal of Matrix Analysis and Applications}, 1998.

\bibitem[Fan and Hoffman(1955)]{fan1955}
K.~Fan and A.~J. Hoffman.
\newblock Some metric inequalities in the space of matrices.
\newblock \emph{Proceedings of the American Mathematical Society}, 6:\penalty0
  111--116, 1955.

\bibitem[Fiori(2005)]{fiori2005}
S.~Fiori.
\newblock Quasi-geodesic neural learning algorithms over the orthogonal group:
  a tutorial.
\newblock \emph{Journal of Machine Learning Research}, 6:\penalty0 743--781,
  2005.

\bibitem[Fisher(1936)]{fisher1936}
R.~A. Fisher.
\newblock The use of multiple measurements in taxonomic problems.
\newblock \emph{Annals of Eugenics}, 7\penalty0 (2):\penalty0 179--188, 1936.

\bibitem[Friedman(1987)]{friedman1987exploratory}
J.~H. Friedman.
\newblock Exploratory projection pursuit.
\newblock \emph{Journal of the American Statistical Association}, 82\penalty0
  (397):\penalty0 249--266, 1987.

\bibitem[Fukumizu et~al.(2004)Fukumizu, Bach, and
  Jordan]{fukumizu2004dimensionality}
K.~Fukumizu, F.~R. Bach, and M.~I. Jordan.
\newblock Dimensionality reduction for supervised learning with reproducing
  kernel hilbert spaces.
\newblock \emph{Journal of Machine Learning Research}, 5:\penalty0 73--99,
  2004.

\bibitem[Fukumizu et~al.(2009)Fukumizu, Bach, and Jordan]{fukumizu2009kernel}
K.~Fukumizu, F.~R. Bach, and M.~I. Jordan.
\newblock Kernel dimension reduction in regression.
\newblock \emph{The Annals of Statistics}, pages 1871--1905, 2009.

\bibitem[Fukunaga(1990)]{fukunagaBook}
K.~Fukunaga.
\newblock \emph{Introduction to Statistical Pattern Recognition}.
\newblock Academic press, 1990.

\bibitem[Gabay(1982)]{gabay1982}
D.~Gabay.
\newblock Minimizing a differentiable function over a differentiable manifold.
\newblock \emph{The Journal of Optimization Theory and Applications},
  37\penalty0 (2):\penalty0 177--219, 1982.

\bibitem[Galpin and Hawkins(1987)]{galpin1987}
J.~S. Galpin and D.~M. Hawkins.
\newblock Methods of {L}1 estimation of a covariance matrix.
\newblock \emph{Computational Statistics and Data Analysis}, 5:\penalty0
  305--319, 1987.

\bibitem[Globerson and Roweis(2005)]{globerson2005metric}
A.~Globerson and S.~T. Roweis.
\newblock Metric learning by collapsing classes.
\newblock In \emph{Advances in Neural Information Processing Systems}, pages
  451--458, 2005.

\bibitem[Goldberger et~al.(2004)Goldberger, Roweis, Hinton, and
  Salakhutdinov]{goldberger2004neighbourhood}
J.~Goldberger, S.~T. Roweis, G.~Hinton, and R.~Salakhutdinov.
\newblock Neighbourhood components analysis.
\newblock In \emph{Advances in Neural Information Processing Systems}, 2004.

\bibitem[Golub and {Van Loan}(1996)]{golubBook}
G.~H. Golub and C.~F. {Van Loan}.
\newblock \emph{Matrix Computations, 3rd edition}.
\newblock Hopkins University Press, Baltimore, 1996.

\bibitem[Gretton et~al.(2005)Gretton, Bousquet, Smola, and
  Sch{\"o}lkopf]{gretton2005measuring}
A.~Gretton, O.~Bousquet, A.~Smola, and B.~Sch{\"o}lkopf.
\newblock Measuring statistical dependence with {H}ilbert-{S}chmidt norms.
\newblock In \emph{Algorithmic Learning Theory}, pages 63--77. Springer, 2005.

\bibitem[Gretton et~al.(2012)Gretton, Borgwardt, Rasch, Sch{\"o}lkopf, and
  Smola]{gretton2012kernel}
A.~Gretton, K.~M. Borgwardt, M.~J. Rasch, B.~Sch{\"o}lkopf, and A.~Smola.
\newblock A kernel two-sample test.
\newblock \emph{Journal of Machine Learning Research}, 13\penalty0
  (1):\penalty0 723--773, 2012.

\bibitem[Hardoon and Shawe-Taylor(2009)]{hardoon2009convergence}
D.~R. Hardoon and J.~Shawe-Taylor.
\newblock Convergence analysis of kernel canonical correlation analysis: theory
  and practice.
\newblock \emph{Machine Learning}, 74\penalty0 (1):\penalty0 23--38, 2009.

\bibitem[Hardoon et~al.(2004)Hardoon, Szedmak, and {Shawe-Taylor}]{hardoon2004}
D.~R. Hardoon, S.~Szedmak, and J~{Shawe-Taylor}.
\newblock Canonical correlation analysis: an overview with application to
  learning methods.
\newblock \emph{Neural Computation}, 16\penalty0 (12):\penalty0 2639--2664,
  2004.

\bibitem[Hastie et~al.(2008)Hastie, Tibshirani, and Friedman]{hastieBook}
T.~Hastie, R.~Tibshirani, and J.~Friedman.
\newblock \emph{Elements of Statistical Learning, 2nd Edition}.
\newblock Cambridge University Press, Cambridge, UK, 2008.

\bibitem[He and Niyogi(2004)]{niyogi2004locality}
X.~He and P.~Niyogi.
\newblock Locality preserving projections.
\newblock In \emph{Advances in Neural Information Processing Systems},
  volume~16, page 153, 2004.

\bibitem[He et~al.(2005)He, Cai, Yan, and Zhang]{he2005neighborhood}
X.~He, D.~Cai, S.~Yan, and H.~Zhang.
\newblock Neighborhood preserving embedding.
\newblock In \emph{IEEE International Conference on Computer Vision}, volume~2,
  pages 1208--1213, 2005.

\bibitem[Higham(1989)]{higham1989}
N.~J. Higham.
\newblock Matrix nearness problems and applications.
\newblock In \emph{Applications of Matrix Theory}, pages 1--27. Oxford
  University Press, 1989.

\bibitem[Hotelling(1936)]{hotelling1936}
H.~Hotelling.
\newblock Relations between two sets of variates.
\newblock \emph{Biometrika}, 28:\penalty0 321--377, 1936.

\bibitem[Huber(1985)]{huber1985projection}
P.~J. Huber.
\newblock Projection pursuit.
\newblock \emph{The Annals of Statistics}, pages 435--475, 1985.

\bibitem[Hyvarinen et~al.(2001)Hyvarinen, Karhunen, and Oja]{icaBook}
A.~Hyvarinen, J.~Karhunen, and E.~Oja.
\newblock \emph{Independent Component Analysis}.
\newblock John Wiley and Sons, 2001.

\bibitem[Joho et~al.(2000)Joho, Mathis, and Lambert]{joho2000}
M.~Joho, H.~Mathis, and R.~H. Lambert.
\newblock Overdetermined blind source separation: Using more sensors than
  source signals in a noisy mixture.
\newblock In \emph{Independent Component Analysis and Blind Signal Separation},
  pages 81--86, 2000.

\bibitem[Journee et~al.(2010)Journee, Nesterov, Richtarik, and
  Sepulchre]{journee2010}
M.~Journee, Y.~Nesterov, P.~Richtarik, and R.~Sepulchre.
\newblock Generalized power method for sparse principal component analysis.
\newblock \emph{Journal of Machine Learning Research}, 11:\penalty0 517--553,
  2010.

\bibitem[Kalman(1960)]{kalman60}
R.~E. Kalman.
\newblock A new approach to linear filtering and prediction problems.
\newblock \emph{The Journal of Basic Engineering}, 82:\penalty0 35--45, 1960.

\bibitem[Kaneko et~al.(2013)Kaneko, Fiori, and Tanaka]{kaneko2013empirical}
T.~Kaneko, S.~Fiori, and T.~Tanaka.
\newblock Empirical arithmetic averaging over the compact stiefel manifold.
\newblock \emph{IEEE Transactions on Signal Processing}, 61\penalty0
  (4):\penalty0 883--894, 2013.

\bibitem[Kao and {Van Roy}(2013)]{vanroy2011}
Y.~H. Kao and B.~{Van Roy}.
\newblock Learning a factor model via regularized {PCA}.
\newblock \emph{Machine Learning}, 91\penalty0 (279-303), 2013.

\bibitem[Kulis(2012)]{kulis2012metric}
B.~Kulis.
\newblock Metric learning: A survey.
\newblock \emph{Foundations \& Trends in Machine Learning}, 5\penalty0
  (4):\penalty0 287--364, 2012.

\bibitem[Larsen(2002)]{larsen2002}
R.~Larsen.
\newblock Decomposition using maximum autocorrelation factors.
\newblock \emph{Journal of Chemometrics}, 16:\penalty0 427--435, 2002.

\bibitem[Lawrence(2012)]{lawrence2012unifying}
N.~D. Lawrence.
\newblock A unifying probabilistic perspective for spectral dimensionality
  reduction: insights and new models.
\newblock \emph{Journal of Machine Learning Research}, 13\penalty0
  (1):\penalty0 1609--1638, 2012.

\bibitem[Lee and Seung(1999)]{lee1999}
D.~D. Lee and H.~S. Seung.
\newblock Learning the parts of objects by non-negative matrix factorization.
\newblock \emph{Nature}, 401:\penalty0 788--791, 1999.

\bibitem[Li and Hu(2011)]{li2011}
J.~F. Li and X.~Y. Hu.
\newblock {Procrustes problems and associated approximation problems for
  matrices with $k$-involutory symmetries.}
\newblock \emph{Linear Algebra and its Applications}, 434:\penalty0 820--829,
  2011.

\bibitem[Li(1991)]{li1991sliced}
K.~C. Li.
\newblock Sliced inverse regression for dimension reduction.
\newblock \emph{Journal of the American Statistical Association}, 86\penalty0
  (414):\penalty0 316--327, 1991.

\bibitem[Luenberger(1972)]{luenberger1972}
D.~Luenberger.
\newblock The gradient projection method along geodesics.
\newblock \emph{Management Science}, 18\penalty0 (11), 1972.

\bibitem[Manton(2002)]{manton2002}
J.~H. Manton.
\newblock Optimization algorithms exploiting unitary constraints.
\newblock \emph{IEEE Transactions on Signal Processing}, 50:\penalty0 635--650,
  2002.

\bibitem[Manton(2004)]{manton2004}
J.~H. Manton.
\newblock On the various generalizations of optimization algorithms to
  manifolds.
\newblock \emph{Proceedings of Mathematical Theory of Network and Systems},
  2004.

\bibitem[Mardia et~al.(1979)Mardia, Kent, and Bibby]{mardiaBook}
K.~V. Mardia, J.~T. Kent, and J.~M. Bibby.
\newblock \emph{Multivariate Analysis}.
\newblock Probability and mathematical statistics. Academic Press, 1979.

\bibitem[Mika et~al.(1999)Mika, R\"{a}tsch, Weston, Sch\"{o}lkopf, and
  M\"{u}ller]{mika1999}
S.~Mika, G.~R\"{a}tsch, J.~Weston, B.~Sch\"{o}lkopf, and K.~R. M\"{u}ller.
\newblock Fisher discriminant analysis with kernels.
\newblock In \emph{Proceedings of the IEEE Signal Processing Society}, pages
  41--48, 1999.

\bibitem[Mirsky(1960)]{mirsky1960}
L.~Mirsky.
\newblock Symmetric gauge functions and unitarily invariant norms.
\newblock \emph{Quarterly Journal of Mathematics}, 11:\penalty0 80--89, 1960.

\bibitem[Mnih and Salakhutdinov(2007)]{mnih2007}
A.~Mnih and R.~Salakhutdinov.
\newblock Probabilistic matrix factorization.
\newblock In \emph{Advances in Neural Information Processing Systems}, pages
  1257--1264, 2007.

\bibitem[Mohamed et~al.(2008)Mohamed, Heller, and Ghahramani]{shakir2008}
S.~Mohamed, K.~Heller, and Z.~Ghahramani.
\newblock Bayesian exponential family {PCA}.
\newblock In \emph{Advances in Neural Information Processing Systems}, 2008.

\bibitem[Muirhead(2005)]{muirheadBook}
R.~J. Muirhead.
\newblock \emph{Aspects of Multivariate Statistical Theory, 2nd Edition}.
\newblock Wiley, 2005.

\bibitem[Nilsson et~al.(2007)Nilsson, Sha, and Jordan]{nilsson2007regression}
J.~Nilsson, F.~Sha, and M.~I. Jordan.
\newblock Regression on manifolds using kernel dimension reduction.
\newblock In \emph{Proceedings of the International Conference on Machine
  Learning}, pages 697--704, 2007.

\bibitem[Nishimori and Akaho(2005)]{nishimori2005}
Y.~Nishimori and S.~Akaho.
\newblock Learning algorithms utilizing quasi-geodesic flows on the {Stiefel}
  manifold.
\newblock \emph{Neurocomputing}, 67:\penalty0 106--135, 2005.

\bibitem[Pearson(1901)]{pearson1901}
K.~Pearson.
\newblock On lines and planes of closest fit to systems of points in space.
\newblock \emph{Philosophical Magazine}, 2:\penalty0 559--572, 1901.

\bibitem[Peltonen et~al.(2007)Peltonen, Goldberger, and
  Kaski]{peltonen2007fast}
J.~Peltonen, J.~Goldberger, and S.~Kaski.
\newblock Fast semi-supervised discriminative component analysis.
\newblock In \emph{Proceeding of the IEEE Workshop on Machine Learning for
  Signal Processing}, pages 312--317. IEEE, 2007.

\bibitem[Rao(1948)]{rao1948}
C.~R. Rao.
\newblock The utilization of multiple measurements in problems of biological
  classification.
\newblock \emph{Journal of the Royal Statistical Society, Series B},
  10\penalty0 (2):\penalty0 159--203, 1948.

\bibitem[Rasmussen and Williams(2006)]{rasmussenBook}
C.~E. Rasmussen and C.K.I. Williams.
\newblock \emph{Gaussian Processes for Machine Learning}.
\newblock MIT Press, Cambridge, 2006.

\bibitem[Rennie and Srebro(2005)]{rennie2005}
J.~D. Rennie and N.~Srebro.
\newblock Fast maximum margin matrix factorization for collaborative
  prediction.
\newblock In \emph{Proceedings of the International Conference on Machine
  Learning}, pages 713--719, 2005.

\bibitem[Roweis(1997)]{roweisSPCA}
S.~T. Roweis.
\newblock {EM} algorithms for {PCA} and sensible {PCA}.
\newblock In \emph{Advances in Neural Information Processing Systems}, 1997.

\bibitem[Roweis and Saul(2000)]{roweis2000}
S.~T. Roweis and L.~K. Saul.
\newblock Nonlinear dimensionality reduction by locally linear embedding.
\newblock \emph{Science}, 290\penalty0 (5500):\penalty0 2323--2326, December
  2000.

\bibitem[Rubinshtein and Srivastava(2010)]{srivastava2010}
E.~Rubinshtein and A.~Srivastava.
\newblock Optimal linear projections for enhancing desired data statistics.
\newblock \emph{Statistical Computing}, 20:\penalty0 267--282, 2010.

\bibitem[Ruhe(1987)]{ruhe1987}
A.~Ruhe.
\newblock Closest normal matrix finally found!
\newblock \emph{Technical Report, University of Goteberg}, 1987.

\bibitem[Sch\"{o}lkopf et~al.(1999)Sch\"{o}lkopf, Smola, and Muller]{kpca}
B.~Sch\"{o}lkopf, A.~Smola, and R.~K. Muller.
\newblock {Kernel principal component analysis}.
\newblock \emph{Advances in Kernel Methods: Support Vector Learning}, pages
  327--352, 1999.

\bibitem[Schonemann(1966)]{schonemann1966}
P.~H. Schonemann.
\newblock A generalized solution of the orthogonal {Procrustes} problem.
\newblock \emph{Psychometrika}, 31:\penalty0 1--10, 1966.

\bibitem[Shen et~al.(2007)Shen, Li, and Brooks]{shen2007}
C.~Shen, H.~Li, and M.~J. Brooks.
\newblock A convex programming approach to the trace quotient problem.
\newblock In \emph{Proceedings of the Asian Conference on Computer Vision},
  pages 227--235. Springer, 2007.

\bibitem[Spearman(1904)]{spearman1904}
C.~Spearman.
\newblock General intelligence, objectively determined and measured.
\newblock \emph{American Journal of Psychology}, 15:\penalty0 201--293, 1904.

\bibitem[Srebro and Jaakkola(2003)]{srebro2003}
N.~Srebro and T.~S. Jaakkola.
\newblock Weighted low-rank approximations.
\newblock \emph{Proceedings of the International Conference on Machine
  Learning}, pages 720--727, 2003.

\bibitem[Srebro et~al.(2004)Srebro, Rennie, and Jaakkola]{srebro2004}
N.~Srebro, J.~Rennie, and T.~S. Jaakkola.
\newblock Maximum-margin matrix factorization.
\newblock In \emph{Advances in Neural Information Processing Systems}, pages
  1329--1336, 2004.

\bibitem[Srivastava and Liu(2005)]{srivastava2005}
A.~Srivastava and X.~Liu.
\newblock Tools for application-driven linear dimension reduction.
\newblock \emph{Neurocomputing}, 67:\penalty0 136--160, 2005.

\bibitem[Stone and Porrill(1998)]{stone1998}
J.~V. Stone and J.~Porrill.
\newblock Undercomplete independent component analysis for signal separation
  and dimension reduction.
\newblock \emph{Technical Report}, 1998.

\bibitem[Stone and Brooks(1990)]{stone1990continuum}
M.~Stone and R.~J. Brooks.
\newblock Continuum regression: cross-validated sequentially constructed
  prediction embracing ordinary least squares, partial least squares and
  principal components regression.
\newblock \emph{Journal of the Royal Statistical Society, Series B}, pages
  237--269, 1990.

\bibitem[Sun et~al.(2009)Sun, Ji, and Ye]{sun2009}
L.~Sun, S.~Ji, and J.~Ye.
\newblock A least squares formulation for a class of generalized eigenvalue
  problems in machine learning.
\newblock In \emph{Proceedings of the International Conference on Machine
  Learning}, pages 977--984. ACM, 2009.

\bibitem[Switzer and Green(1984)]{switzer1984}
P.~Switzer and A.~A. Green.
\newblock Min/max autocorrelation factors for multivariate spatial imagery.
\newblock \emph{Technical Report, Stanford University}, 1984.

\bibitem[Tenenbaum et~al.(2000)Tenenbaum, de{Silva}, and
  Langford]{tenenbaum2000}
J.~B. Tenenbaum, V.~de{Silva}, and J.~C. Langford.
\newblock A global geometric framework for nonlinear dimensionality reduction.
\newblock \emph{Science}, 290\penalty0 (5500):\penalty0 2319--2323, December
  2000.

\bibitem[Theobald(1975)]{theobald75}
C.~M. Theobald.
\newblock An inequality with application to multivariate analysis.
\newblock \emph{Biometrika}, 62\penalty0 (2):\penalty0 461--466, 1975.

\bibitem[Tibshirani(1996)]{tibshirani1996}
R.~Tibshirani.
\newblock Regression shrinkage and selection via the lasso.
\newblock \emph{Journal of the Royal Statistical Society, Series B},
  58:\penalty0 267--288, 1996.

\bibitem[Timm(2002)]{timm2002applied}
N.~H. Timm.
\newblock \emph{Applied Multivariate Analysis}.
\newblock Springer, 2002.

\bibitem[Tipping and Bishop(1999)]{tipping99}
M.~E. Tipping and C.~M. Bishop.
\newblock Probabilistic principal component analysis.
\newblock \emph{Journal of the Royal Statistical Society, Series B},
  61\penalty0 (3):\penalty0 611--622, 1999.

\bibitem[Torgerson(1952)]{torgerson1952}
W.~S. Torgerson.
\newblock Multidimensional scaling: I. theory and method.
\newblock \emph{Psychometrika}, 17\penalty0 (4):\penalty0 401--419, 1952.

\bibitem[Torresani and Lee(2006)]{torresani2006large}
L.~Torresani and K.~Lee.
\newblock Large margin component analysis.
\newblock In \emph{Advances in Neural Information Processing Systems}, pages
  1385--1392, 2006.

\bibitem[Turner and Sahani(2007)]{turner2007}
R.~Turner and M.~Sahani.
\newblock A maximum-likelihood interpretation for slow feature analysis.
\newblock \emph{Neural Computation}, 19\penalty0 (4):\penalty0 1022--1038,
  2007.

\bibitem[Ulfarsson and Solo(2008)]{ulfarsson2008}
M.~O. Ulfarsson and V.~Solo.
\newblock Sparse variable {PCA} using geodesic steepest descent.
\newblock \emph{IEEE Transactions on Signal Processing}, 56:\penalty0
  5823--5832, 2008.

\bibitem[{Van der Maaten} et~al.(2009){Van der Maaten}, Postma, and {Van den
  Herik}]{van2009}
L.~J.~P. {Van der Maaten}, E.~O. Postma, and H.~J. {Van den Herik}.
\newblock Dimensionality reduction: A comparative review.
\newblock \emph{Tilburg University Technical Report, TiCC-TR 2009-005}, 2009.

\bibitem[Van~der Maaten and Hinton(2008)]{van2008visualizing}
Laurens Van~der Maaten and Geoffrey Hinton.
\newblock Visualizing data using t-{SNE}.
\newblock \emph{Journal of Machine Learning Research}, 9\penalty0
  (2579-2605):\penalty0 85, 2008.

\bibitem[Varshney and Willsky(2011)]{varshney2011}
K.~R. Varshney and A.~S. Willsky.
\newblock Linear dimensionality reduction for margin-based classification:
  high-dimensional data and sensor networks.
\newblock \emph{IEEE Transactions on Signal Processing}, 59:\penalty0
  2496--2512, 2011.

\bibitem[Wang et~al.(2010)Wang, Sha, and Jordan]{wang2010unsupervised}
M.~Wang, F.~Sha, and M.~I. Jordan.
\newblock Unsupervised kernel dimension reduction.
\newblock In \emph{Advances in Neural Information Processing Systems}, pages
  2379--2387, 2010.

\bibitem[Weinberger and Saul(2006)]{weinberger2006}
K.~Q. Weinberger and L.~K. Saul.
\newblock Unsupervised learning of image manifolds by semidefinite programming.
\newblock \emph{International Journal of Computer Vision}, 70\penalty0
  (1):\penalty0 77--90, 2006.

\bibitem[Weinberger and Saul(2009)]{weinberger2009distance}
K.~Q. Weinberger and L.~K. Saul.
\newblock Distance metric learning for large margin nearest neighbor
  classification.
\newblock \emph{Journal of Machine Learning Research}, 10:\penalty0 207--244,
  2009.

\bibitem[Weinberger et~al.(2005)Weinberger, Blitzer, and
  Saul]{weinberger2005distance}
K.~Q. Weinberger, J.~Blitzer, and L.~K. Saul.
\newblock Distance metric learning for large margin nearest neighbor
  classification.
\newblock In \emph{Advances in Neural Information Processing Systems}, pages
  1473--1480, 2005.

\bibitem[Welling et~al.(2003)Welling, Agakov, and Williams]{welling2003}
M.~Welling, F.~Agakov, and C.~K.~I. Williams.
\newblock Extreme component analysis.
\newblock In \emph{Advances in Neural Information Processing Systems}, 2003.

\bibitem[Welling et~al.(2004)Welling, Zemel, and Hinton]{welling2004}
M.~Welling, R.~S. Zemel, and G.~E. Hinton.
\newblock Probabilistic sequential independent components analysis.
\newblock In \emph{IEEE Transactions on Neural Networks}, 2004.

\bibitem[Williams(2002)]{williams2002}
C.~K.~I. Williams.
\newblock On a connection between kernel {PCA} and metric multidimensional
  scaling.
\newblock \emph{Machine Learning}, 46\penalty0 (1-3):\penalty0 11--19, 2002.

\bibitem[Williams and Agakov(2002)]{agakov2002}
C.~K.~I. Williams and F.~Agakov.
\newblock Products of {G}aussians and probabilistic minor component analysis.
\newblock \emph{Neural Computation}, 14\penalty0 (5):\penalty0 1169--1182,
  2002.

\bibitem[Wiskott(2003)]{wiskott2003}
L.~Wiskott.
\newblock Slow feature analysis: A theoretical analysis of optimal free
  responses.
\newblock \emph{Neural Computation}, 15\penalty0 (9):\penalty0 2147--2177,
  2003.

\bibitem[Wiskott and Sejnowski(2002)]{wiskott2002}
L.~Wiskott and T.~Sejnowski.
\newblock Slow feature analysis: unsupervised learning of invariances.
\newblock \emph{Neural Computation}, 14\penalty0 (4):\penalty0 715--770, 2002.

\bibitem[Xing et~al.(2002)Xing, Jordan, Russell, and Ng]{xing2002distance}
E.~P. Xing, M.~I. Jordan, S.~Russell, and A.~Y. Ng.
\newblock Distance metric learning with application to clustering with
  side-information.
\newblock In \emph{Advances in Neural Information Processing Systems}, pages
  505--512, 2002.

\bibitem[Yan and Tang(2006)]{yan2006}
S.~Yan and X.~Tang.
\newblock Trace quotient problems revisited.
\newblock In \emph{Proceedings of the European Conference on Computer Vision},
  pages 232--244. Springer, 2006.

\bibitem[Yang(2007)]{yang2007overview}
L.~Yang.
\newblock An overview of distance metric learning.
\newblock \emph{Proceedings of Computer Vision and Pattern Recognition}, 7,
  2007.

\bibitem[Yang and Jin(2006)]{yang2006distance}
L.~Yang and R.~Jin.
\newblock Distance metric learning: A comprehensive survey.
\newblock \emph{Michigan State Universiy Technical Report}, 2006.

\bibitem[Yu et~al.(2006)Yu, Yu, Tresp, Kriegel, and Wu]{yu2006supervised}
S.~Yu, K.~Yu, V.~Tresp, H.~P. Kriegel, and M.~Wu.
\newblock Supervised probabilistic principal component analysis.
\newblock In \emph{Proceedings of the International Conference on Knowledge
  Discovery and Data Mining}, pages 464--473, 2006.

\bibitem[Zhang et~al.(1999)Zhang, Cichocki, and Amari]{zhang1999}
L.~Zhang, A.~Cichocki, and S.~Amari.
\newblock Natural gradient algorithm for blind separation of overdetermined
  mixture with additive noise.
\newblock \emph{IEEE Signal Processing Letters}, 6\penalty0 (11):\penalty0
  293--295, 1999.

\bibitem[Zhao et~al.(2007)Zhao, Lin, and Tang]{zhao2007}
D.~Zhao, Z.~Lin, and X.~Tang.
\newblock Laplacian {PCA} and its applications.
\newblock In \emph{Proceedings of the International Conference on Computer
  Vision}, pages 1--8, 2007.

\bibitem[Zou et~al.(2006)Zou, Hastie, and Tibshirani]{zou2006}
H.~Zou, T.~Hastie, and R.~Tibshirani.
\newblock Sparse principal component analysis.
\newblock \emph{Journal of Computational and Graphical Statistics}, 15\penalty0
  (2):\penalty0 265--286, 2006.

\end{thebibliography}

\end{document}